\documentclass[letterpaper, 10 pt, conference, twoside]{ieeetran}

 \usepackage{graphicx}
\usepackage{amsmath}
\usepackage{amssymb}  
\usepackage{amsthm}

\usepackage{relsize}
\newtheorem{theorem}{Theorem}

\newtheorem{assumption}{Assumption}
\usepackage{placeins} 
\usepackage{subcaption} 
\captionsetup{font=scriptsize}
\usepackage{algorithm}
\usepackage{algorithmic}
\newtheorem{remark}{Remark}
\usepackage{xcolor}

\IEEEoverridecommandlockouts                              




\usepackage{graphbox}
\usepackage{fancyhdr}
\makeatletter
\def\ps@IEEEtitlepagestyle{%
  \def\@oddhead{\mycopyrightnotice}%
  \def\@oddfoot{\hbox{}\@IEEEheaderstyle\leftmark\hfil\thepage}\relax
  \def\@evenhead{\@IEEEheaderstyle\thepage\hfil\leftmark\hbox{}}\relax
  \def\@evenfoot{}%
}
\def\mycopyrightnotice{%
  \begin{minipage}{\textwidth}
  \centering \scriptsize
  This article has been accepted for publication in the IEEE/RSJ International Conference on Intelligent Robots and Systems (IROS) 2025. Copyright~\copyright~20XX IEEE.  Personal use of this material is permitted.  Permission from IEEE must be obtained for all other uses, in any current or future media, including reprinting/republishing this material for advertising or promotional purposes, creating new collective works, for resale or redistribution to servers or lists, or reuse of any copyrighted component of this work in other works.
  \end{minipage}
}
\makeatother

\title{\LARGE \bf
Distributed Fault-Tolerant Multi-Robot Cooperative Localization in Adversarial Environments}

\author{Tohid Kargar Tasooji \and 
Ramviyas Parasuraman
\thanks{School of Computing, University of Georgia, Athens, GA 30602 USA.                       

Author E-mails:
Tohid.KargarTasooji@uga.edu; ramviyas@uga.edu.}
\thanks{This research is supported by the Army Research Laboratory and was accomplished under DCIST Cooperative Agreement W911NF-17-2-0181. }
}

\begin{document}

\maketitle

\begin{abstract}

In multi-robot systems (MRS), cooperative localization is a crucial task for enhancing system robustness and scalability, especially in GPS-denied or communication-limited environments. However, adversarial attacks, such as sensor manipulation, and communication jamming, pose significant challenges to the performance of traditional localization methods. In this paper, we propose a novel distributed fault-tolerant cooperative localization framework to enhance resilience against sensor and communication disruptions in adversarial environments. We introduce an adaptive event-triggered communication strategy that dynamically adjusts communication thresholds based on real-time sensing and communication quality. This strategy ensures optimal performance even in the presence of sensor degradation or communication failure. Furthermore, we conduct a rigorous analysis of the convergence and stability properties of the proposed algorithm, demonstrating its resilience against bounded adversarial zones and maintaining accurate state estimation. Robotarium-based experiment results show that our proposed algorithm significantly outperforms traditional methods in terms of localization accuracy and communication efficiency, particularly in adversarial settings. Our approach offers improved scalability, reliability, and fault tolerance for MRS, making it suitable for large-scale deployments in real-world, challenging environments.
 
\end{abstract}
\begin{IEEEkeywords}
Multi-robot systems, cooperative localization, fault tolerance, adversarial environments.
\end{IEEEkeywords}
\section{INTRODUCTION}

Cooperative teams of multiple robots greatly enhance system robustness and scalability, making them crucial for applications such as area surveillance, assisted navigation, and disaster management \cite{1,2,3,4}. Among the core tasks for multi-robot systems (MRS), localization stands out as fundamental. Unlike independent localization for each robot, cooperative localization (CL) leverages shared information to estimate a robot's state by combining its own observations with data from other robots \cite{5, 6, 28}. Each robot gathers self-motion and relative measurements—such as range, bearing, or relative pose—using a combination of proprioceptive and exteroceptive sensors \cite{7, 8}. While these relative observations may be affected by sensor noise or environmental factors, they still provide valuable data to mitigate localization errors caused by accumulated drift or sensor failures \cite{9}. Furthermore, CL significantly reduces localization errors in GPS-denied environments, making it a highly effective approach. By improving the localization accuracy of individual robots, CL enables cooperative teams to perform tasks more efficiently and reliably, even under challenging conditions \cite{10,yang2023hier}. 

Decentralized cooperative localization algorithms, such as extended Kalman filters (EKF) \cite{11, 12}, unscented Kalman Filter (UKF) \cite{33}, and covariance intersection (CI) \cite{13}, offer advantages over centralized methods by providing better scalability and robustness, particularly in dynamic and noisy environments \cite{14}. The collaborative nature of robots also exposes them to adversarial attacks, which can compromise sensor data and communication, potentially resulting in system-wide failures. To address these vulnerabilities, it is essential to develop advanced decentralized algorithms for cooperative localization coupled with fault-tolerant strategies. These measures enhance resilience against malicious attacks and sensor malfunctions, ensuring safe, secure, and efficient functioning in real-world scenarios \cite{10, 15, 16}.

Communication is essential for localization in MRS, but it introduces challenges such as increased bandwidth demands, network congestion, packet loss, latency, and excessive power consumption \cite{parasuraman2018kalman,latif2023com}. As the number of robots grows, maintaining real-time coordination and accurate relative positioning becomes difficult \cite{bezzo2024,22}. Event-triggered communication mechanisms, which update only when significant changes occur, can mitigate these issues by reducing transmission frequency, alleviating network load, and conserving power. However, traditional event-triggered schemes with static thresholds may not adapt well to dynamic conditions such as adversarial environments.

In efforts to remedy these gaps, we propose a novel framework for multi-robot cooperative localization in adversarial and dynamic environments, focusing on resilient state estimation and adaptive event-triggered communication strategy. The contributions of this work are as follows:

\begin{itemize}
    \item First, we propose a fault-tolerant cooperative localization algorithm with adaptive event-triggered communication to enhance multi-robot resilience in adversarial environments. The strategy dynamically adjusts communication thresholds based on sensing reliability and signal quality, increasing updates in sensor-degraded zones and lowering thresholds in disrupted communication zones. This adaptive approach ensures real-time recovery from sensor or communication failures, improving robustness. Leveraging the robustness of Cubature Kalman Filter (CKF) \cite{34}, we develop a decentralized localization strategy with event-triggered communication. The integrated framework enhances scalability, reliability, and fault tolerance for large-scale MRS.

\item Second, we present a rigorous analysis of the convergence and stability properties of the proposed adaptive event-triggered cooperative localization algorithm. The analysis demonstrates that, under a bounded adversarial zone, our proposed localization remains bounded and converges to accurate position estimates.

\item Finally, we conduct a series of experiments in the Robotarium simulation-hardware platform \cite{Robotarium} to compare the performance of our algorithm with traditional cooperative localization methods. The results show that our algorithm outperforms these methods in terms of localization accuracy, communication efficiency, and scalability, particularly in adversarial environments.

\end{itemize}

\section{Related work}
The resilience and security of MRS in adversarial environments have become critical areas of research due to the increasing complexity and significance of these systems in real-world applications. Existing work explores various facets of this challenge, addressing specific aspects of resilience, security, and cooperative operation.

Mitra et al. \cite{18} investigate distributed state estimation in adversarial settings, targeting scenarios where deceptive sensing information misleads robots but overlook cases involving sensor deactivation or failure.
Ramachandran et al. \cite{19} emphasize adaptive resource reconfiguration to restore sensing capabilities after failures. While their work demonstrates adaptability, it does not address the need for preemptive resilience against adversarial threats. In another study, Wehbe et al. \cite{10} adopt a probabilistic approach to model uncertainties in robot interactions and estimate the likelihood of withstanding attacks. Their framework leverages binary decision diagrams (BDDs) and optimization techniques to adapt to system changes and is validated through simulations.

Focusing on adversarial environments, Zhou et al. \cite{20} propose a robust algorithm to optimize target-tracking performance in the presence of worst-case sensor and communication attacks, demonstrating strong robustness and efficient execution in simulations. They further extend this work in \cite{21} by introducing a resilient multi-target tracking algorithm that withstands any number of failures while maintaining scalability and provable performance bounds.

Recent efforts have also focused on robust state estimation under adversarial conditions to maintain localization accuracy. Tasooji et al. \cite{16} propose a secure decentralized event-triggered cooperative localization framework to mitigate communication attacks, including Denial of Service (DoS) and False Data Injection (FDI). Their algorithm ensures convergence under bounded attack rates. Bianchin et al. \cite{22} address adversarial localization and trajectory planning, proposing robust control designs and waypoint selection to counteract sensor spoofing and false control inputs.

Decentralized control architectures, which rely on local measurements for decision-making, have also gained attention. Cavorsi et al. \cite{23} develop a resilient path-planning algorithm using Control Barrier Functions (CBFs) to maintain safety while handling up to 
F adversaries. Catellani et al. \cite{24} propose a probabilistic framework employing particle filters for position estimation in communication-denied scenarios, while Park et al. \cite{25} introduce fault-tolerant distributed control for robot rendezvous, ensuring convergence despite faults but not explicitly addressing adversarial disruptions.

Addressing cyberattacks in cooperative localization remains an underexplored domain. Michieletto et al. \cite{26} present a distributed method to detect GNSS spoofing attacks in UAV formations, utilizing cascaded estimation algorithms for simultaneous localization and spoofing detection. Their approach integrates information-theoretic tools and measurement reliability-based decision logic, offering an innovative step toward secure localization in adversarial environments.

These studies collectively highlight the need for comprehensive solutions that integrate resilience, secure localization, and adaptive control to enable reliable MRS operation in adversarial and resource-constrained settings. We depart from the literature in following key ways: 
\begin{itemize}
    \item Most prior works model adversary strategies as random or periodic attacks. In contrast, our work considers a more practical and realistic scenario. We adopt the concept of a sensing danger zone and a communication danger zone, where the attack rate is higher near the center and decreases with distance from the center.
    
    \item Unlike existing solutions \cite{15, 16}, which typically rely on fixed communication protocols, our work introduces an adaptive event-triggered communication strategy. This strategy adjusts communication thresholds dynamically based on real-time sensing reliability and communication quality. 
    
    \item Additionally, works such as in \cite{22, 26} assume each robot is equipped with a GPS sensor for absolute position measurements; our approach is designed for real-world scenarios where GPS is unavailable, enhancing the practicality and usability of MRS in such conditions. 
    
    \item Furthermore, unlike \cite{15, 16}, which utilize the EKF for cooperative localization in adversarial environments, our work presents a more robust solution leveraging the CKF framework \cite{34} tailored to enhance cooperative localization under such challenging conditions.    
\end{itemize}
{\color{black} To the best of our knowledge, this is the first work that uniquely integrates spatially realistic danger zone modeling with an adaptive communication strategy, departing from prior works that rely on simplified adversarial models and static communication policies. These innovations enable more practical and resilient localization in GPS-denied and adversarial environments, as demonstrated by our experimental results.}

\section{Distributed Cooperative Localization in Adversarial Environments}
\subsection{Framwork Overview}

Our objective is to enhance the resilience of cooperative localization in MRS operating under adversarial or degraded conditions. Localization accuracy can be compromised by sensor degradation, communication disruptions, or adversarial attacks. To address these challenges, the framework integrates a sensing layer, communication layer, and estimation \& detection layer, incorporating robust attack detection and adaptive event-triggered communication.
Fig.~\ref{fig:overview} provides an architectural overview of the proposed framework.

The \textit{sensing layer} provides motion states and relative measurements using inertial and range/bearing sensors. However, sensor performance can be degraded due to noise, environmental interference, or adversarial attacks. The communication layer enables information exchange between robots but is susceptible to jamming, packet loss, and network congestion, affecting localization accuracy.

The \textit{estimation \& detection} layer, leveraging a Cubature Kalman Filter (CKF), ensures robust state estimation in nonlinear motion models. Two detection mechanisms enhance reliability: (1) an innovation residual-based attack detector that flags anomalies in relative measurements, and (2) an event detector that identifies abrupt state deviations due to adversarial activities or environmental changes.

The framework introduces sensing danger zones and communication danger zones to dynamically adapt communication based on real-time conditions. In sensing danger zones, communication frequency increases to compensate for degraded sensor measurements. In communication danger zones, transmission is reduced to mitigate network congestion while preserving localization accuracy.

The proposed CKF-based framework enhances resilience against sensor failures, communication disruptions, and adversarial zones by integrating attack detection with an adaptive event-triggered communication strategy. This approach significantly improves the fault tolerance and reliability of multi-robot cooperative localization, ensuring robust performance in real-world applications.

\begin{figure}[t]
    \centering
    \includegraphics[width=\linewidth]{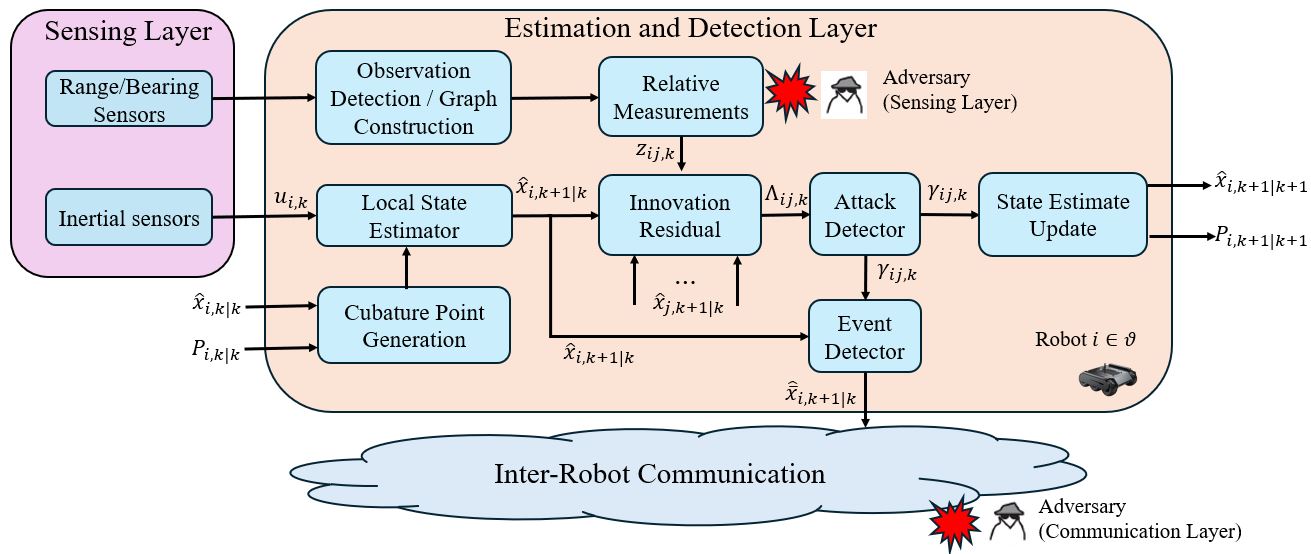}
    \caption{Block diagram of resilient multi-robot cooperative localization using CKF with adaptive event-triggered communication strategy in adversarial environments}
    \label{fig:overview}
\end{figure}

\subsection{Modeling of Agents and Adversaries}
\subsubsection{\textbf{Robot Motion Dynamics}}

Let us consider \( N \) robots, represented by the set \( V = \{1, \dots, N\} \). Each robot \( i \in V \) follows a discrete-time motion model given by the equation:
\begin{equation} \label{1}  
x_{i,k+1} = f_i(x_{i,k}, u_{i,k}),
\end{equation}  
where \( x_i \) is the state vector of robot \( i \), and \( u_i \) denotes the control input. The state of robot \( i \) is represented by the vector \( x_i = [x_i, y_i, \theta_i]^T \), where \( x_i \) and \( y_i \) are the coordinates of the robot's position in a 2D plane, and \( \theta_i \) is its heading angle. The control input for robot \( i \) is given by \( u_i = [v_i, \omega_i]^T \), where \( v_i \) represents the linear velocity and \( \omega_i \) represents the angular velocity. 

The unicycle motion model for each robot is given by:
\begin{equation} \label{2}  
\begin{aligned}
\left\{  
\begin{aligned}  
x_{i,k+1} &= x_{i,k} + v_{i,k} \cos(\theta_{i,k}) \Delta t, \\
y_{i,k+1} &= y_{i,k} + v_{i,k} \sin(\theta_{i,k}) \Delta t, \\
\theta_{i,k+1} &= \theta_{i,k} + \omega_{i,k} \Delta t.  
\end{aligned}  
\right.  
\end{aligned}
\end{equation}

\begin{table}[t]
\caption{Definition of notations used in this paper.}
\label{tab:definition}
\centering
\scriptsize
\begin{tabular}{|c|l|}
\hline
\textbf{Notation} & \textbf{Description} \\
\hline
$\hat{x}_{k|k-1}$ & Predicted state estimate at time step $k$ \\
$P_{k|k-1}$ & Predicted error covariance at time step $k$ \\
$S_{k|k-1}$ & Cholesky factor of the predicted error covariance \\
$X_{i,k|k-1}$ & Cubature points at time step $k-1$ \\
$f(X, u)$ & State transition function \\
$Q_{k-1}$ & Process noise covariance at time step $k-1$ \\
$Z_{ij,k|k-1}$ & Relative measurement between robots $i$ and $j$ \\
$\hat{z}_{ij,k|k-1}$ & Predicted measurement between robots $i$ and $j$ \\
$\Lambda_{ij,k}$ & Innovation for the measurement between robots $i$ and $j$ \\
$r_{ij,k}$ & Residual for attack detection \\
$\delta_{ij}(G)$ & Threshold for event-triggered communication \\
$\alpha$ & Constant for event-triggered threshold \\
$\lambda_2(G)$ & Second smallest eigenvalue of the graph's Laplacian matrix \\
$\gamma$ & Constant for event-triggered threshold  \\
$\zeta_s$ & Weight for sensing danger zone in event-triggered condition\\
$\zeta_c$ &  Weight for communication danger zone in event-triggered condition \\
$D_s(i)$ & Adversary rate for sensing danger zone in robot $i$ \\
$D_c(i)$ & Adversary rate for communication danger zone in robot $i$ \\
$R_k$ & Measurement noise covariance at time step $k$ \\
$P_{ij,k|k-1}^{zz}$ & Innovation covariance between robots $i$ and $j$ \\
$P_{ij,k|k-1}^{xz}$ & Cross-covariance between state and innovation \\
$K_{ij,k}$ & Kalman gain for measurement update between robots $i$ and $j$ \\
$\hat{x}_{i,k|k}$ & Updated state estimate for robot $i$ \\
$P_{i,k|k}$ & Updated error covariance for robot $i$ \\
$a_{ij,k}$ & Indicator of interaction between robots $i$ and $j$ \\
$\lambda_{ij,k}$ & Indicator of transmission between robots $i$ and $j$ \\
\hline
\end{tabular}
\end{table}

In this collaborative framework, each robot uses its onboard sensors to estimate its own position and orientation (self-localization) and neighbor robot information (range and bearing) and shares this information with neighboring robots. This collaborative localization process improves the accuracy of position estimates across the entire network, particularly in environments where GPS signals are unreliable or unavailable. By leveraging shared information from all robots, the system becomes more resilient and efficient in its operation.

\subsubsection{\textbf{Sensing Model}}

Each robot in the system is assumed to be equipped with a range and bearing sensor that measures relative distances and angles to nearby robots. The robots are organized in a connected sensing graph \( G = (V, E) \), where the edges \( E \) represent the communication links formed based on the sensor measurements. The edge set is defined as:
\[
E = \{(i, j) \mid j \in \mathcal{N}_i \Leftrightarrow |p_i - p_j| \leq R \},
\]
where \( R \) is the sensing range, and \( \mathcal{N}_i \) represents the set of neighbors for robot \( i \). The sensing graph reflects the direct measurements that each robot can make to its neighbors within the sensing range.

A robot \( i \in V \) observes a neighboring robot \( j \in \mathcal{N}_i \) using the following measurement model:
\begin{equation} \label{3}
z_{ij,k} = h_{ij}(x_{i,k}, x_{j,k}) + v_{ij,k}, \quad \forall i \in V, j \in \mathcal{N}_i,
\end{equation}
where \( z_{ij,k} \) represents the measurement obtained by robot \( i \) of robot \( j \), and \( v_{ij,k} \) is the measurement noise, assumed to be zero-mean Gaussian noise with covariance \( R \), i.e., \( v_{ij,k} \sim \mathcal{N}(0, R) \). The function \( h_{ij}(x_{i,k}, x_{j,k}) \) models the relationship between the states of robots \( i \) and \( j \) as perceived by robot \( i \)'s sensor. In the case of range and bearing sensors, this function is typically a non-linear mapping that depends on the relative positions of the robots.

We assume that the noise in the measurements, \( v_{ij,k} \), is independent across different robot pairs, meaning that each robot's sensor noise does not interfere with the measurements from other robots. This model facilitates the estimation of relative positions and orientations within the robot network and helps improve the overall state estimation of the system as robots collaborate by sharing their sensory information.

\subsubsection{\textbf{Attack Model}}
This part presents an attack model for cooperative localization in MRS, focusing on Sensing Danger Zones (SDZs) and Communication Danger Zones (CDZs), which represent areas where sensing and communication performance degrade due to environmental or adversarial factors.

A \textit{Sensing Danger Zone} (SDZ) is an area where sensor reliability is compromised, either due to environmental factors (e.g., partial occlusion or physical blockage to the sensors) or adversarial interference. The \textit{Sensing Risk Field} $R_s(p)$ quantifies the failure probability at position $p$, and the SDZ is defined as the set of positions where $R_s(p) \geq \tau_s$:
\begin{equation}
D_s = \{ p \mid R_s(p) \geq \tau_s \}
 \end{equation}
In the SDZ, adversaries can inject noise into sensor measurements, expressed as:
\begin{equation}
z_{ij,k} = h_{ij}(x_{i,k}, x_{j,k}) + \zeta_{ij,k} \Gamma_{ij,k} + \nu_{ij,k}
 \end{equation}
where $\zeta_{ij,k}$ indicates compromised measurements, and the adversary manipulates the residual error:
\begin{equation}
r_{ij,k} = z_{ij,k} - \hat{z}_{ij,k|k-1}
 \end{equation}
by maximizing $\|r_{ij,k}\|$ to induce large estimation errors while avoiding detection.

A \textit{Communication Danger Zone} (CDZ) is a region where communication is unreliable due to interference or jamming. The \textit{Communication Risk Field} $R_c(p)$ quantifies communication failure probability, and the CDZ is defined as:
\begin{equation}
D_c = \{ p \mid R_c(p) \geq \tau_c \}
 \end{equation}
In a CDZ, robots rely only on internal sensors and motion models, which leads to unchanged state estimates and increasing uncertainty. 
{\color{black}
Let $\sigma_{ij,k} \in \{0, 1\}$ be a binary communication indicator at time step $k$, where $\sigma_{ij,k} = 1$ denotes successful communication between robot $i$ and robot $j$, and $\sigma_{ij,k} = 0$ indicates a communication failure. The state update rule is given by:
\begin{equation}
\hat{x}_{i,k+1|k+1} = \begin{cases} 
    \hat{x}_{i,k+1|k}, & \text{if } \sigma_{ij,k} = 0 \\
    \hat{x}_{i,k+1|k+1}, & \text{if } \sigma_{ij,k} = 1 
\end{cases}
 \end{equation}
where $\hat{x}_{i,k+1|k+1}$ is the updated state using the Kalman gain $K_{ij,k}$ when communication is available, and remains unchanged otherwise.
}

\subsubsection{\textbf{Communication}}

We employ a decentralized communication framework for a network of robots, represented as an undirected graph \( G = \{V, E, A\} \), where \( |V| = N \) denotes the robots, and \( E \) defines communication links. The adjacency matrix \( A \in \mathbb{R}^{N \times N} \) captures connectivity, with \( a_{ij} = 1 \) if \( v_{ij} \in E \) and \( a_{ii} = 0 \). The Laplacian matrix \( L \in \mathbb{R}^{N \times N} \) is defined as \( l_{ii} = \sum_{j \neq i} a_{ij} \) and \( l_{ij} = -a_{ij} \) for \( i \neq j \). Robots communicate bidirectionally, enabling mutual detection and measurement of relative distances \( z_{ij} \).

To optimize communication efficiency, we propose an adaptive event-triggered communication scheme where robot \( i \) transmits to \( j \) only when the innovation  
\begin{equation}
\Lambda_{ij,k} = z_{ij,k} - \hat{z}_{ij,k|k-1}
\end{equation}
exceeds a dynamic threshold \( \delta_{ij}(G) \), ensuring efficient bandwidth utilization. The predicted measurement \( \hat{z}_{ij,k|k-1} \) is derived from the predicted states \( \hat{x}_{i,k|k-1} \) and \( \hat{x}_{j,k|k-1} \). Therefore, communication is triggered if: 
\begin{equation}
\|\Lambda_{ij,k}\| > \delta_{ij}(G).
\end{equation}

The threshold \( \delta_{ij}(G) \) dynamically adapts based on network connectivity, innovation magnitude, and environmental risk zones:  
\begin{equation} \label{eq:event-threshold}
\delta_{ij}(G) = \frac{\alpha}{\lambda_2(G)} \left(1 + \gamma\,\|\Lambda_{ij,k}\|\right) \left(1 - \zeta_s D_s(i) - \zeta_c D_c(i)\right),
\end{equation}
where \( \lambda_2(G) \) represents the graph's algebraic connectivity, \(\alpha\) and \(\gamma\) are thresholds of event-triggered condition, and \( D_s(i), D_c(i) \) denote rate of adversary for Sensing and Communication Danger Zones, respectively. \( \zeta_s(i), \zeta_c(i) \) are their corresponding weights. This framework optimizes communication efficiency while preserving estimation accuracy in dynamic and potentially adversarial environments.
\begin{remark}
 The event-triggering rule based on the adaptive threshold in \eqref{eq:event-threshold} balances communication efficiency and estimation accuracy by adapting the threshold based on three main factors: network connectivity, innovation magnitude, and environmental risk. In well-connected networks (i.e., higher \( \lambda_2(G) \) ), the threshold increases to suppress redundant communications, thereby conserving bandwidth and energy. Conversely, when the innovation is significant or when robots operate in high-risk zones, the threshold is reduced to ensure frequent information exchange. This adaptive mechanism not only mitigates communication overload under normal conditions but also robustly responds to uncertainties and adversarial challenges, making it particularly suitable for dynamic and adversarial environments.   
\end{remark}

\subsection{Integration with CKF Framework} 
We propose a robust cooperative localization framework for MRS using the CKF, optimized for non-smooth maneuvers, limited sensing, dynamic network topologies, and adversarial conditions. The CKF outperforms in handling nonlinear dynamics and measurement models, ensuring reliable state estimation even in challenging environments. Unlike the EKF and UKF, the CKF uses cubature points to approximate Gaussian-weighted integrals, avoiding linearization. It improves upon the UKF with a third-degree spherical-radial cubature rule for accurate state estimation in high-dimensional systems \cite{34}. In adversarial scenarios like jamming or spoofing, the event-based CKF remains resilient by filtering out erroneous data and employing robust estimation techniques.
The event-based CKF cooperative localization framework for both normal and adversarial conditions is presented in Algorithm 1.

\begin{algorithm}
\scriptsize
\caption{Distributed Fault-Tolerant Multi-Robot Cooperative Localization}
\begin{algorithmic}[1]
\STATE \textbf{Initialize:} State estimate $\hat{x}_{0|0}$, covariance $P_{0|0}$, graph $G$, thresholds $\tau_s, \tau_c, \rho$.
\FOR{each time step $k$}
    \STATE \textbf{Prediction:}
    \STATE Compute Cholesky factor: $P_{i,k-1|k-1} = S_{i,k-1|k-1} S_{i,k-1|k-1}^\top$.
    \STATE Generate cubature points: $X_{i,k-1|k-1} = S_{k-1|k-1} \xi_i + \hat{x}_{k-1|k-1}$.
    \STATE Propagate: $X_{i,k|k-1}^* = f(X_{i,k-1|k-1}, u_{k-1})$.
    \STATE Compute predicted state: $\hat{x}_{k|k-1} = \frac{1}{m} \sum_{i} X_{i,k|k-1}^*$.
    \STATE Compute covariance: $P_{i,k|k-1} = \text{Var}(X_{i,k|k-1}^*) + Q_{i,k-1}$.
    
    \STATE \textbf{Correction:}
    \STATE Compute Cholesky factor: $P_{i,k|k-1} = S_{i,k|k-1} S_{i,k|k-1}^\top$.
    \STATE Generate cubature points: $X_{i,k|k-1} = S_{i,k|k-1} \xi_i + \hat{x}_{i, k|k-1}$.
    \FOR{each neighbor $j \in S_i$}
        \STATE Compute measurement: $Z_{ij,k|k-1} = h_{ij}(X_{i,k|k-1}, X_{j,k|k-1})$.
        \STATE Compute innovation: $\Lambda_{ij,k} = z_{ij,k} - \hat{z}_{ij,k|k-1}$.
        
        \STATE \textbf{Attack Detection:}
        \IF{$\| \Lambda_{ij,k} \| > \rho$} 
            \STATE Discard $z_{ij,k}$.
        \ENDIF
        
        \STATE \textbf{Event-Triggered Communication:}
        \STATE Compute threshold:
        \[
        \delta_{ij}(G) = \frac{\alpha}{\lambda_2(G)} (1 + \gamma \| \Lambda_{ij,k} \|) (1 - \zeta_s D_s(i) - \zeta_c D_c(i)).
        \]
        \IF{$\| \Lambda_{ij,k} \| > \delta_{ij}(G)$}
            \STATE Transmit measurement.
            \STATE Compute Kalman gain: $K_{ij,k} = P_{ij,k|k-1}^{xz} (P_{ij,k|k-1}^{zz})^{-1}$.
            \STATE Update state: $\hat{x}_{i,k|k} = \hat{x}_{i,k|k-1} + \sum_{j \in S_i} a_{ij,k} \lambda_{ij,k} K_{ij,k} \Lambda_{ij,k}$
            \STATE Update covariance:$
            P_{i,k|k} = P_{i,k|k-1} - K_{ij,k} P_{ij,k|k-1}^{zz} K_{ij,k}^T.
            $
        \ENDIF
    \ENDFOR
\ENDFOR
\end{algorithmic}
\end{algorithm}

\subsection{Theoretical Analysis}
We establish key assumptions about system dynamics, measurement functions, noise, and adversarial attacks. The analysis aims to show that the CKF's estimation error converges to a bounded value over time, even in adversarial environments. By leveraging Lipschitz continuity in the state dynamics and measurement functions, we demonstrate that the event-triggered communication mechanism mitigates noise and adversarial interference, ensuring robust localization performance.
\begin{assumption} \label{Lipschitz Continuous Dynamics}
The state dynamics for each robot $i$ are governed by a Lipschitz continuous function $f$:
\begin{equation}
\|f(x_{i,k}, u_k) - f(x_{i,k}', u_k)\| \leq L_f \|x_{i,k} - x_{i,k}'\|
\end{equation}
where $L_f$ is the Lipschitz constant.
\end{assumption}
\begin{assumption} \label{Lipschitz Continuous Measurement Function}
 The measurement function $h(x_{i,k}, x_{j,k})$ is Lipschitz continuous:
\begin{equation}
\small
\|h(x_{i,k}, x_{j,k}) - h(x_{i,k}', x_{j,k}')\| \leq L_h \left(\|x_{i,k} - x_{i,k}'\|   + \|x_{j,k} - x_{j,k}'\|\right)
\end{equation}
where $L_h$ is the Lipschitz constant.
\end{assumption}
\begin{assumption} \label{Bounded Process and Measurement Noise}
  The process noise $w_k$ and measurement noise $v_k$ are bounded:
\begin{equation}
\|Q_k\| \leq \sigma_Q, \quad \|R_k\| \leq \eta_R
\end{equation}
\end{assumption}
\begin{assumption} \label{Bounded Adversarial Attacks}
The adversary exploits SDZ and CDZ, where environmental or adversarial factors degrade system performance. The number of attacks per time step on sensing and communication is bounded by $\alpha_s$ and $\alpha_c$.
\end{assumption}
\begin{theorem} \label{Theorem}
Consider the nonlinear dynamic system (1)-(3) along with the event-triggered mechanism (11) and assume that assumptions (1)-(4) are satisfied. The state estimation error of the distributed event-based Cubature Kalman Filter (CKF) for multi-robot cooperative localization, operating in dynamic network topology with adversarial environments, converges to a bounded value as time $k \to \infty$.
\end{theorem}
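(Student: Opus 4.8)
The plan is to prove ultimate boundedness of the stacked estimation error by a Lyapunov/contraction argument, in the spirit of stochastic-stability proofs for the EKF but adapted to the cubature update and to the event-triggered, adversarial setting. Define the posterior error $e_{i,k}=x_{i,k}-\hat x_{i,k|k}$ and the aggregate $E_k=[\,e_{1,k}^\top,\dots,e_{N,k}^\top\,]^\top$. The first step is to pin down uniform bounds on the filter covariances: using Assumption~\ref{Bounded Process and Measurement Noise}, the Cholesky/cubature recursions of Algorithm~1, and the standing hypothesis that the sensing graph stays connected (so $\lambda_2(G)\ge\underline{\lambda}>0$ uniformly), one shows there exist $0<\underline{p}\le\bar p$ with $\underline{p}I\preceq P_{i,k|k-1},\,P_{i,k|k}\preceq\bar pI$ for all $i,k$. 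This standard detectability/stabilizability ingredient lets the inverse covariances act as well-conditioned Lyapunov weights and bounds the Kalman gains, $\|K_{ij,k}\|\le c_K$.

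Next I would derive the one-step error recursion. The prediction step gives, via Assumption~\ref{Lipschitz Continuous Dynamics}, $\|\hat x_{i,k+1|k}-x_{i,k+1}\|\le L_f\|e_{i,k}\|+b_w+\varepsilon^{\mathrm{cub}}_{i,k}$, where $b_w$ collects the bounded process-noise contribution governed by $\sigma_Q$ and $\varepsilon^{\mathrm{cub}}_{i,k}$ is the residual of the third-degree spherical--radial cubature rule, uniformly bounded because $f$ is smooth and the covariances are bounded. For the correction step I would split on the event-trigger: when $\|\Lambda_{ij,k}\|\le\delta_{ij}(G)$ no update occurs and the innovation is already small, with $\delta_{ij}(G)$ uniformly bounded since $\lambda_2(G)\ge\underline\lambda$ and $0\le\zeta_sD_s(i)+\zeta_cD_c(i)\le1$; when the edge fires, the gain $K_{ij,k}$ contracts the error along the observed directions. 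Assumption~\ref{Lipschitz Continuous Measurement Function} converts innovations into a bound in $\|e_{i,k}\|+\|e_{j,k}\|$, while the detector rule $\|\Lambda_{ij,k}\|\le\rho$ together with Assumption~\ref{Bounded Adversarial Attacks} caps the injected bias that survives detection by a term proportional to $\alpha_s\rho$ (an analogous $\alpha_c$-dependent term absorbs the stalled estimates in communication danger zones). Collecting terms, $E_{k+1}$ obeys a bound that is a linear contraction in $\|E_k\|$ plus a uniformly bounded disturbance.

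With $V_k=E_k^\top\operatorname{blkdiag}_i\!\big(P_{i,k|k}^{-1}\big)E_k$ as the Lyapunov candidate, the two steps combine into $V_{k+1}\le\kappa V_k+c$, where $\kappa$ depends on $L_f$, $L_h$, the covariance bounds, and $c_K$, and the constant $c$ aggregates $\sigma_Q$, $\eta_R$, the cubature residual bound, the worst-case triggering threshold, and the bounded attack contributions $\alpha_s\rho$ and $\alpha_c$. Iterating yields $V_k\le\kappa^kV_0+c/(1-\kappa)$ whenever $\kappa<1$, and since $V_k\ge\bar p^{-1}\|E_k\|^2$ this gives $\limsup_{k\to\infty}\|E_k\|^2\le\bar p\,c/(1-\kappa)$; that is, the estimation error converges to a bounded value, as claimed.

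The step I expect to be the main obstacle is securing $\kappa<1$. This requires a quantitative statement that each window of steps injects enough information to dominate the open-loop growth factor $L_f$, i.e. a collective-observability / persistency-of-excitation condition guaranteeing that the event-triggered measurements are incorporated often enough despite the adaptive thresholds and the danger zones. I would discharge it using the structure of \eqref{eq:event-threshold}: the threshold shrinks precisely where $D_s(i)$ or $D_c(i)$ is large, forcing more frequent communication exactly where sensing degrades, combined with connectivity of $G$; this will likely require the mild extra assumption that along every spanning subgraph of $G$ some edge triggers within a uniformly bounded interval. A secondary, more routine point is ensuring $\varepsilon^{\mathrm{cub}}_{i,k}$ is uniformly bounded in $k$ so that it folds cleanly into $c$, which follows from the covariance bounds and smoothness of $f$ and $h$ but should be stated explicitly.
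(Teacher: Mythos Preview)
Your proposal and the paper's proof follow the same overall contraction-plus-geometric-series strategy: bound the prediction error via the Lipschitz constant $L_f$, bound the update via the measurement Lipschitz constant and the gain, collect an affine recursion, and iterate. The main technical difference is your choice of Lyapunov candidate. You work with the covariance-weighted quadratic $V_k=E_k^\top\operatorname{blkdiag}_i(P_{i,k|k}^{-1})E_k$ in the Reif--Unbehauen style, which obliges you to first pin down two-sided covariance bounds $\underline{p}I\preceq P\preceq\bar pI$ and a uniform gain bound $\|K_{ij,k}\|\le c_K$; the paper instead tracks $\mathbb{E}[\|e_{i,k}\|^2]$ directly, performs a first-order expansion of $h_{ij}$ about the predicted pair, and simply \emph{asserts} the existence of a contraction factor $\mu\in(0,1)$ at the update step before iterating to $\frac{\beta\eta_R+\Delta(e_{k+1})}{1-\mu}$. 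Your route is longer but buys you an honest accounting of where the contraction actually comes from: you correctly flag that securing $\kappa<1$ is the crux and requires a collective-observability/persistency condition so that triggered updates occur often enough to beat the open-loop growth $L_f$. The paper's proof elides precisely this point (and in fact never reconciles the prediction factor $L_f^2$ with the asserted update factor $\mu$), so your identification of that obstacle is sharper than what the paper provides. Your explicit treatment of the cubature residual $\varepsilon^{\mathrm{cub}}_{i,k}$ and the event-trigger case split is also absent from the paper, which folds these into ``higher-order terms'' without comment.
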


\begin{proof}
1) \textit{State Prediction Step}: The predicted state for robot $i$ is given by:
\begin{equation}
\hat{x}_{i,k+1|k} = f(\hat{x}_{i,k|k}, u_k) + w_{i,k}, \nonumber
\end{equation}
where $w_{i,k} \sim \mathcal{N}(0, Q_k)$. The prediction error is:
\begin{equation}
e_{i,k+1|k} = x_{i,k+1} - \hat{x}_{i,k+1|k} \nonumber
\end{equation}
Using the Lipschitz continuity of $f$:
\begin{equation}
\|e_{i,k+1|k}\| \leq L_f \|e_{i,k|k}\| + \|w_{i,k}\|. \nonumber
\end{equation}
Taking the expectation:
\begin{equation}
\mathbb{E}[\|e_{i,k+1|k}\|^2] \leq L_f^2 \mathbb{E}[\|e_{i,k|k}\|^2] + \sigma_Q. \nonumber
\end{equation}

2) \textit{Measurement Update Step}: The state estimation update incorporates measurements:
\begin{align}
\scalebox{0.8}{$
\begin{aligned}
\hat{x}_{i,k+1|k+1} = \hat{x}_{i,k+1|k}  + \mathlarger{\mathlarger{\sum}}_{j \in S_i} a_{ij,k+1} \lambda_{ij,k+1} \sigma_{ij,k} K_{ij,k+1}  \left( z_{i,k+1} - \hat{z}_{ij,k+1|k} \right) \nonumber
\end{aligned} 
$}
\end{align}
Considering the first-order expansion of $ h_{ij}(x_{i,k}, x_{j,k}) $ around $(\hat{x}_{i,k+1|k}, \hat{x}_{j,k+1|k})$ be
\begin{align}
\begin{aligned}
h_{ij}(x_{i,k+1}, x_{j,k+1}) \approx h_{ij}(\hat{x}_{i,k+1|k}, \hat{x}_{j,k+1|k})   +  H_{i, k+1}e_{i,k+1|k} \\ + H_{j, k+1}e_{j,k+1|k}
 + O(e_{i,k+1|k}, e_{j,k+1|k})
\end{aligned} \nonumber
\end{align}
where $O(e_{i,k+1|k}, e_{j,k+1|k})$ represents higher order terms. 
The estimation error after the update is:
\begin{align}
\scalebox{0.8}{$
\begin{aligned}
e_{i,k+1|k+1} = \left(I - \sum_{j \in S_i} a_{ij,k+1}\lambda_{ij,k+1}\sigma_{ij,k}K_{ij,k+1}H_{i,k+1}\right)e_{i,k+1|k} \\
- \sum_{j \in S_i} a_{ij,k+1}\lambda_{ij,k+1}\sigma_{ij,k}K_{ij,k}v_{ij,k} 
- \sum_{j \in S_i} a_{ij,k+1}\lambda_{ij,k+1}\sigma_{ij,k} K_{ij,k}e_{j,k+1|k} \\
- \sum_{j \in S_i} a_{ij,k+1}\lambda_{ij,k+1}\sigma_{ij,k}K_{ij,k}O(e_{i,k+1|k}, e_{j,k+1|k})\\
- \sum_{j \in S_i} a_{ij,k+1}\lambda_{ij,k+1}\sigma_{ij,k}K_{ij,k}\zeta_{ij,k} \Gamma_{ij,k} 
\end{aligned} \nonumber
$}
\end{align}

Expectations over the estimation error result in the expectation over its individual components. 

{\color{black}
Now we introduce adversarial attacks $\Delta (e_{k+1})$
as an additional error term. Under the assumption that the gains \(K_{ij,k}\) are optimal and that all the event-triggered coefficients and higher-order terms can be bounded appropriately, we can assert the existence of constants \(\mu \in (0,1)\) and \(\beta>0\) s.t.
\begin{equation}
\mathbb{E}\Big[\|e_{i,k+1|k+1}\|^2\Big] \le \mu\,\mathbb{E}\Big[\|e_{i,k+1|k}\|^2\Big] + \beta\,\eta_R + \Delta (e_{k+1}). \nonumber
\end{equation}

Here, \(\mu\) represents the impact due to the measurement update (including the effect of neighbor agents’ error terms and the linearization residuals), while \(\beta\,\eta_R\) bounds the contribution of the measurement noise and higher-order terms.

Iterating the inequality over time yields
\begin{align}
\begin{aligned}
\mathbb{E}\Big[\|e_{i,k+1|k+1}\|^2\Big] \le \mu^k\,\mathbb{E}\Big[\|e_{i,0}\|^2\Big]  \\  + \sum_{t=0}^{k}\mu^t\ \Big (\beta\,\eta_R +\Delta (e_{k+1}) \Big )
\end{aligned} \nonumber
\end{align}
Since the geometric series \(\sum_{t=0}^{\infty}\mu^t\) converges to \(\frac{1}{1-\mu}\) (for \(\mu\in (0,1)\)), we obtain
\begin{equation}
\lim_{k\to\infty}\mathbb{E}\Big[\|e_{i,k+1|k+1}\|^2\Big] \le \frac{\beta\,\eta_R + \Delta (e_{k+1})}{1-\mu}. \nonumber
\end{equation}}
Thus, the estimation error converges to a bounded value, demonstrating robustness under dynamic network topology and adversarial conditions.
\end{proof}
\begin{remark}
Theorem \ref{Theorem} establishes the boundedness of the state estimation error in the proposed event-triggered CKF framework for multi-robot localization under dynamic topology and adversarial environments. The error exhibits geometric decay, converging to an upper bound influenced by noise, adversarial parameters, and event-triggering thresholds. Notably, the additional adversarial term $\Delta (e_{k+1})$ is mitigated by the proposed event-triggered strategy, which adapts communication thresholds based on sensing reliability and signal quality. This adaptive mechanism enhances robustness, ensuring real-time recovery from sensor or communication failures while improving scalability and fault tolerance in large-scale robotic networks.
\end{remark}

\begin{figure}[t]
\captionsetup{justification=centering}
    \centering
    \begin{subfigure}[b]{0.23\textwidth}
        \includegraphics[width=\textwidth]{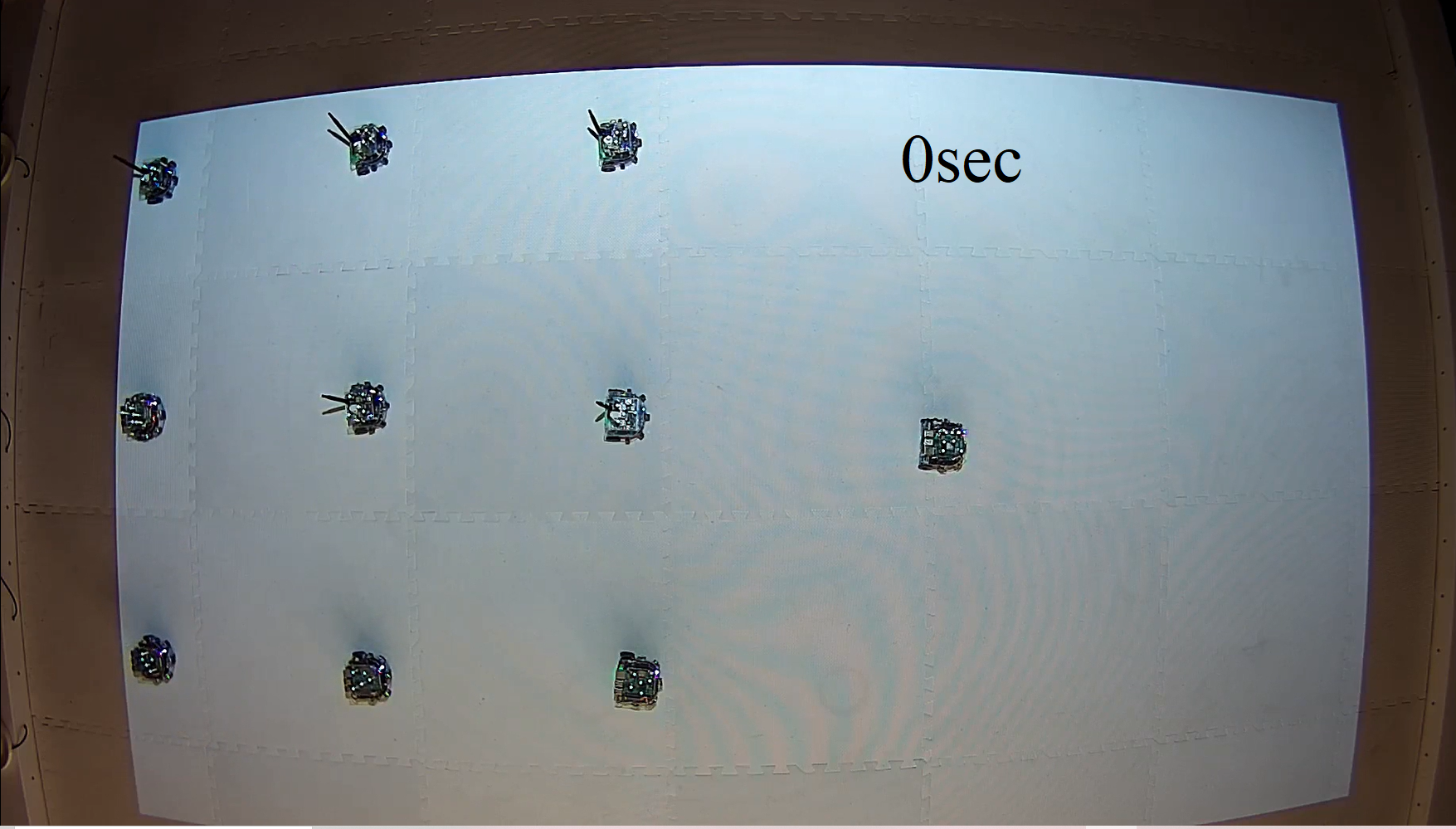}
    \end{subfigure}
    ~ 
        \begin{subfigure}[b]{0.23\textwidth}
        \includegraphics[width=\textwidth]{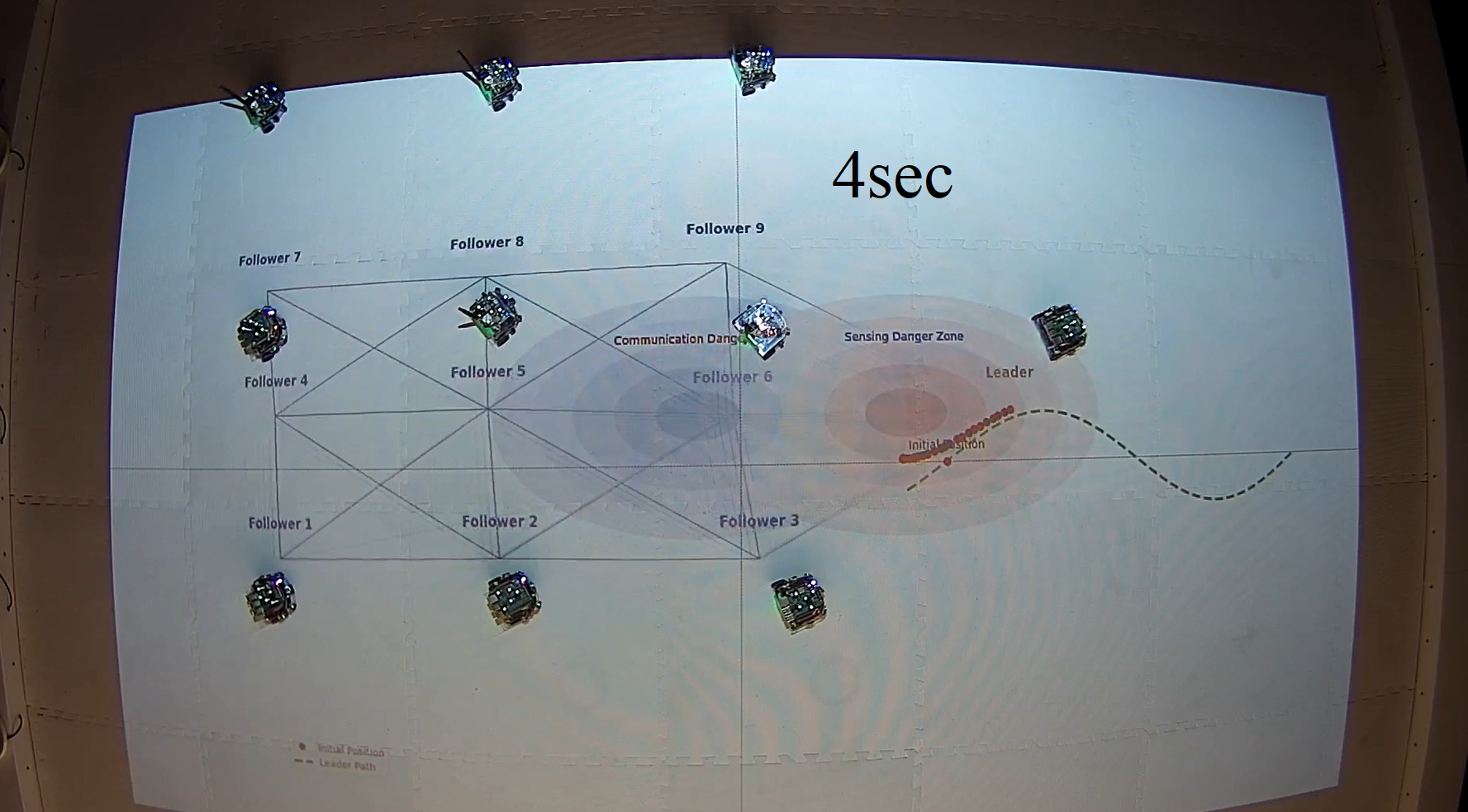}
    \end{subfigure}
    ~ 
     \begin{subfigure}[b]{0.23\textwidth}
        \includegraphics[width=\textwidth]{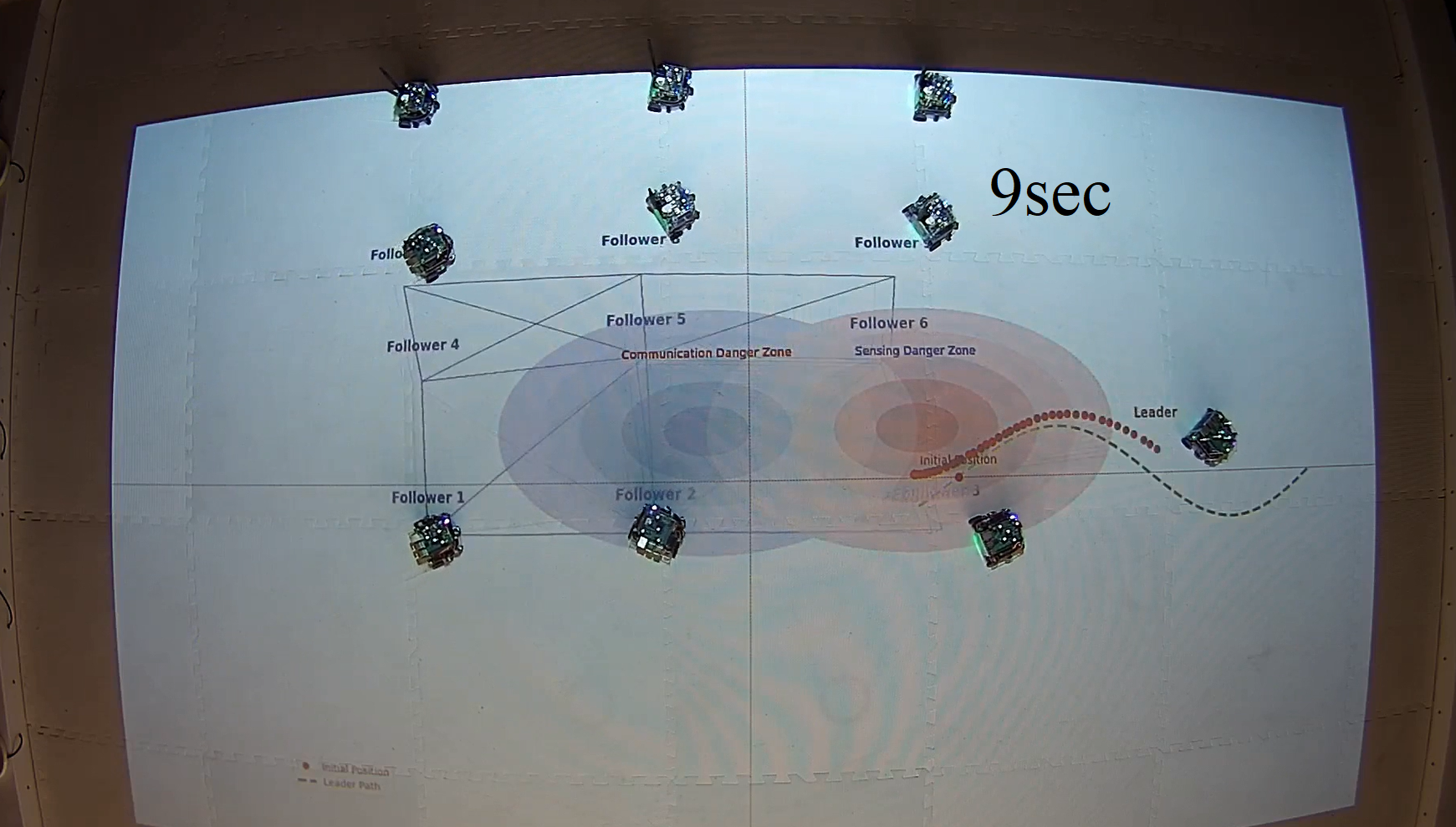}
    \end{subfigure}
    ~ 
     \begin{subfigure}[b]{0.23\textwidth}
        \includegraphics[width=\textwidth]{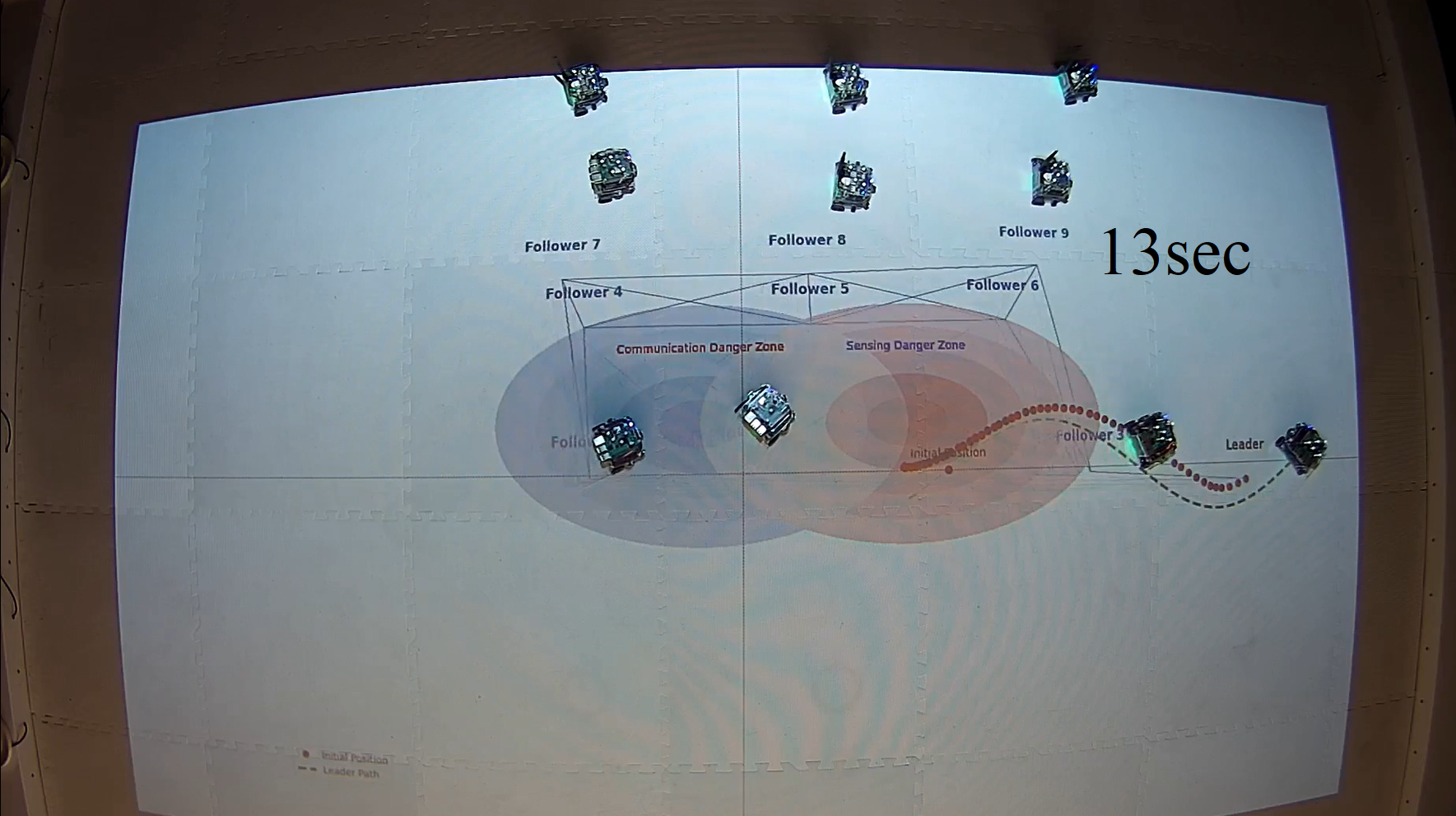}
    \end{subfigure}
    \caption{Illustration of experiments conducted in the Robotarium hardware platform \cite{Robotarium}, showing the evolution of robot trajectories over time.}
  \label{fig:robotarium}
\end{figure} 

\section{Experiments}
The experimental setup involved deploying 17 robots within the Robotarium simulation-hardware platform \cite{Robotarium}, which provides remote access to a multi-robot hardware testbed. The robots were initially arranged in a square lattice configuration, confined within a 3m×3m area of the testbed. We investigate a leader-follower scenario where the leader robot traverses predefined waypoints, and follower robots track its trajectory using desired positions communicated by the leader. Fig.~\ref{fig:robotarium} presents an illustration of the experiment setting.
In the experiments, the parameters for range and bearing noise are initialized to 0.005 and 0.05, respectively. The robots' communication and sensing radius is set to 1, and the network comprises 17 robots. The communication topology is dynamic, potentially changing based on the relative range between robots during movement.

For benchmarking, we compare our method against the distributed cooperative localization algorithms based on the EKF \cite{16} and the UKF \cite{33}. Furthermore, we evaluate the performance of our proposed algorithm under adversarial conditions, accounting for both SDZ and CDZ. 

To assess performance, we define the Mean Square Localization Error (MSLE) as: $MSLE = \frac{1}{N} \sum_{i=1}^{N} \sum_{\substack{j=1 \ j \neq i}}^{N} \left( | \mathbf{x}_i - \mathbf{x}_j | - | \hat{\mathbf{x}}_i - \hat{\mathbf{x}}_j | \right)^2$, where \(\mathbf{x}\) represents the true positions of the robots in the ground truth reference frame and \( \hat{\mathbf{x}}  \) denotes the estimated positions obtained from the distributed localization process.

\begin{figure}[t]
\captionsetup{justification=centering}
    \centering
    \begin{subfigure}[b]{0.45\linewidth}
        \includegraphics[width=\textwidth]{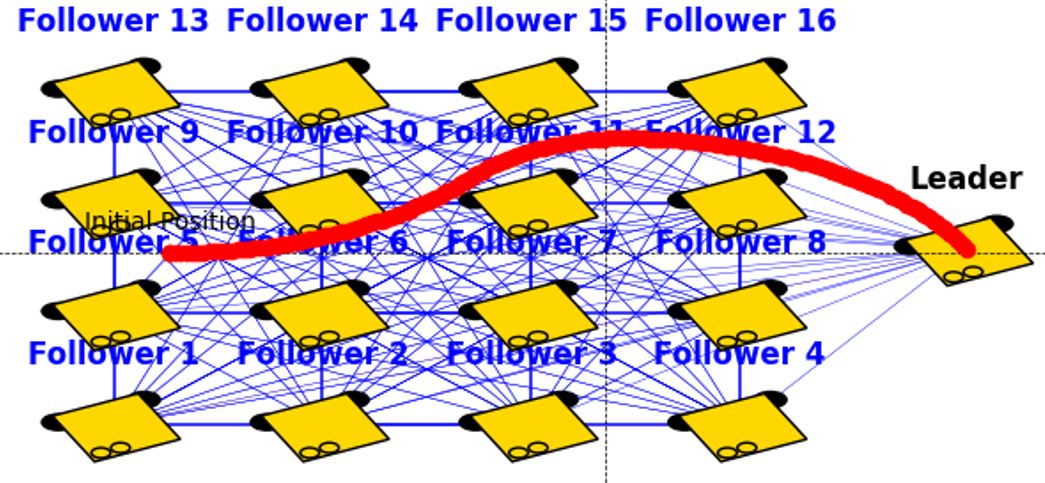}
        \caption{ Arc-Turn maneuver}
        \label{fig:gull-nodanger}
    \end{subfigure}
    ~ 
        \begin{subfigure}[b]{0.45\linewidth}
        \includegraphics[width=\textwidth]{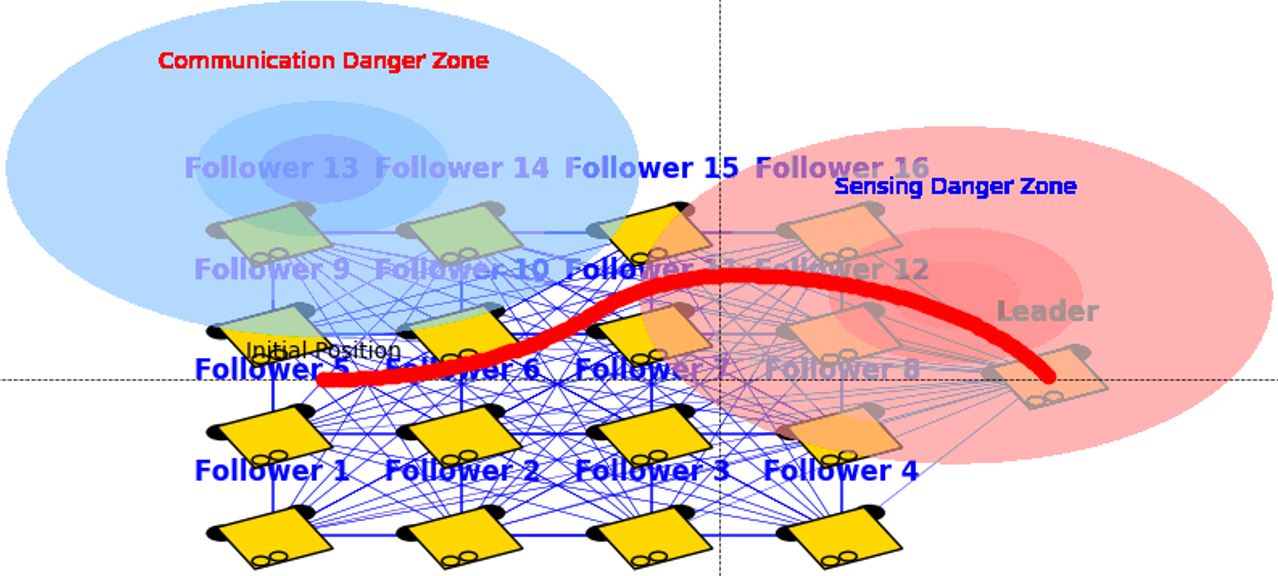}
        \caption{ Arc-Turn maneuver and both danger sensing and  communication zones}
        \label{fig:gull-danger}
    \end{subfigure}
    \caption{Depiction of non-smooth maneuvers and without/with adversarial settings.}
  \label{Fig:trajectory}
\end{figure} 

\begin{figure*}[t]
\captionsetup{justification=centering}
    \centering
    ~ 
    \begin{subfigure}[b]{0.31\linewidth}
        \includegraphics[width=\textwidth]{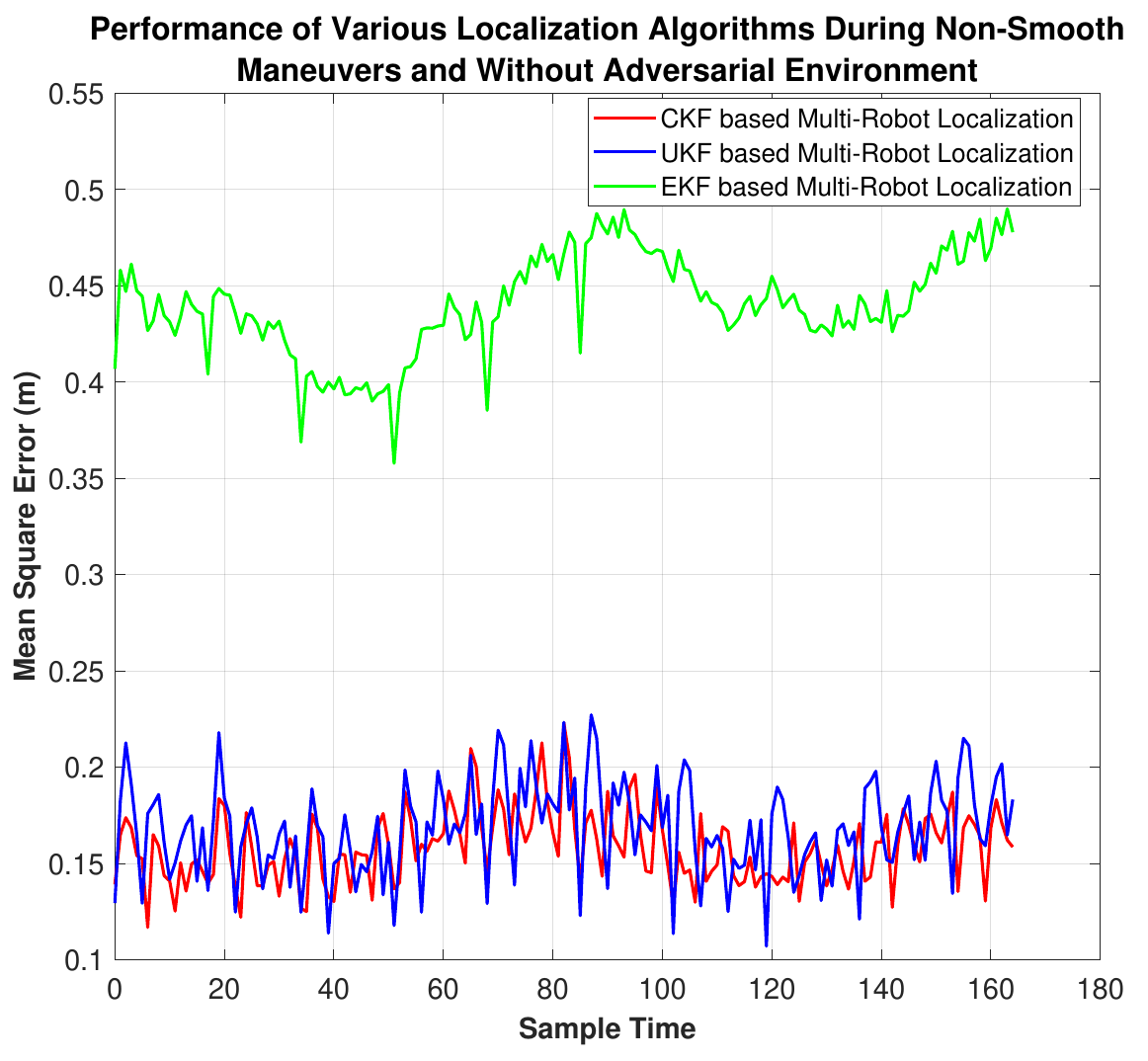}
        \caption{\color{black}Non-adversarial environment and Arc-turn maneuver}
        \label{fig:tiger1}
    \end{subfigure}
 ~ 
     \begin{subfigure}[b]{0.31\linewidth}
        \includegraphics[width=\textwidth]{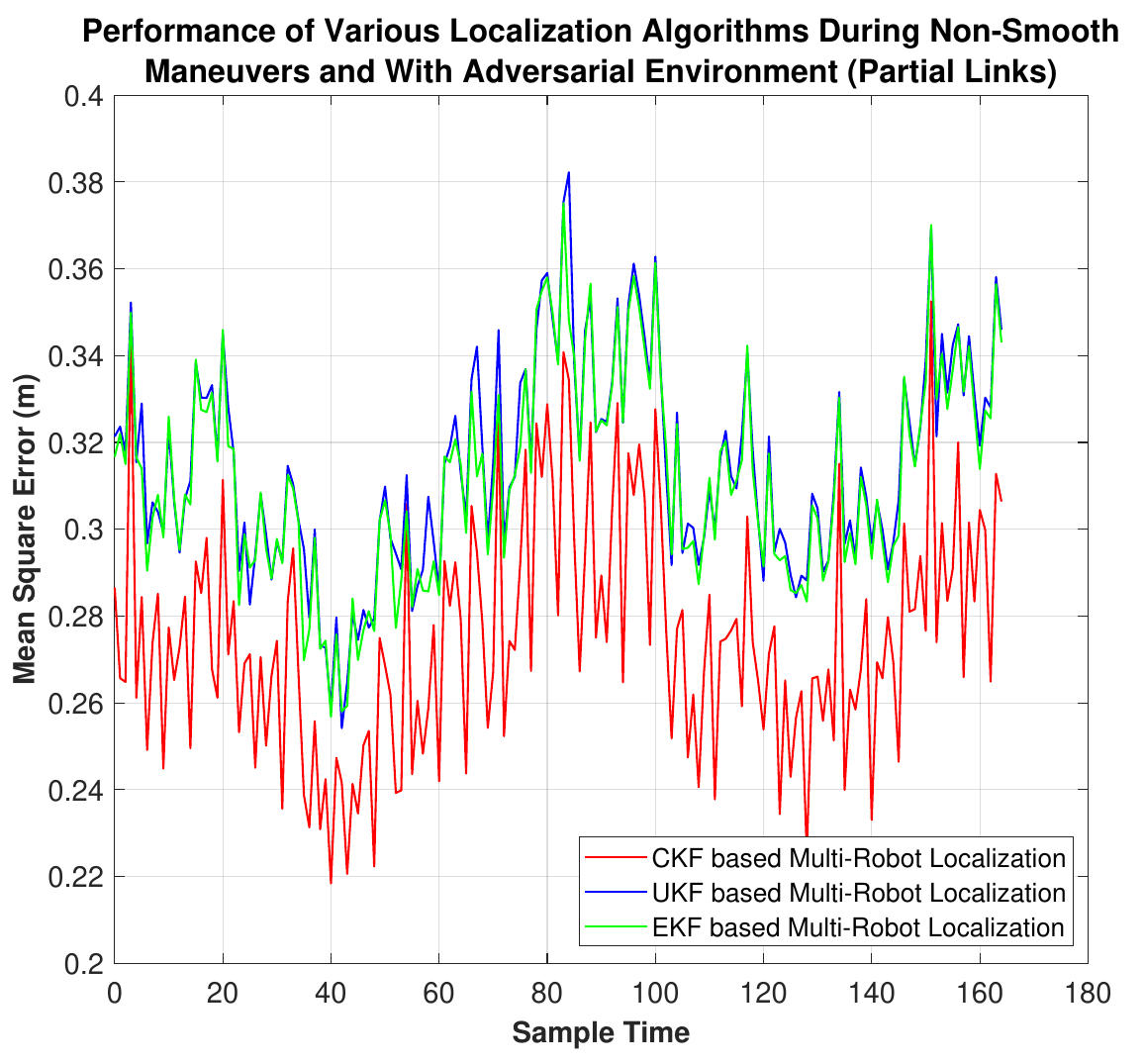}
        \caption{Adversarial environment with partial links affected.
}
        \label{fig:tiger2}
    \end{subfigure}
 ~ 
  \begin{subfigure}[b]{0.31\linewidth}
        \includegraphics[width=\textwidth]{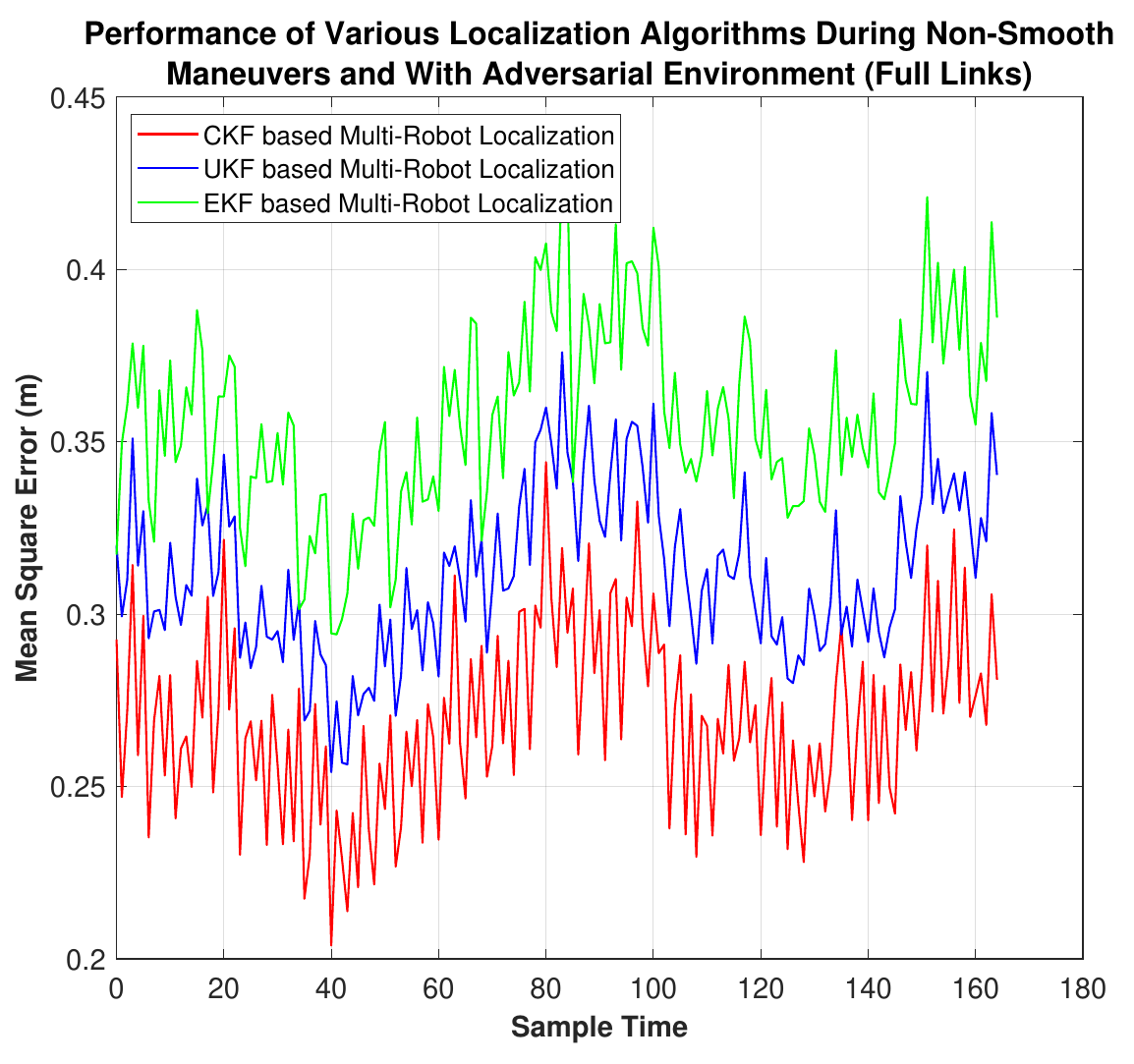}
        \caption{Adversarial environment with full links affected.
}
        \label{fig:tiger3}
    \end{subfigure}
 ~ 
    \caption{Localization performance of various localization algorithms during non-smooth maneuvers and without/with adversarial environment.}
  \label{fig:performance}
\end{figure*}

\begin{figure*}[t]
\captionsetup{justification=centering}
    \centering
        \begin{subfigure}[b]{0.235\textwidth}
        \includegraphics[width=\textwidth]{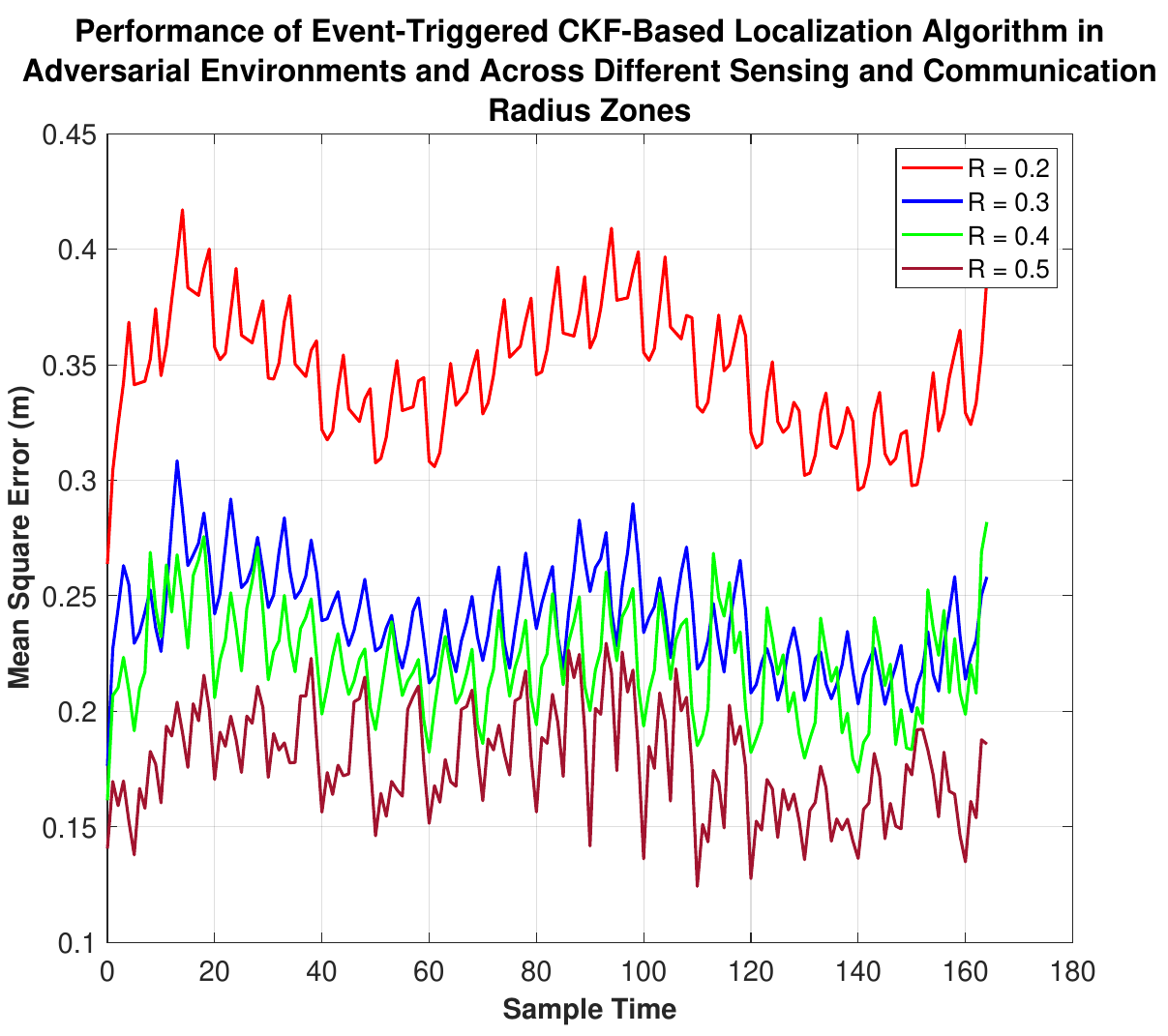}
        \caption{{\color{black} Localization results in both danger sensing and  communication zones with Different Radius }}
        \label{fig:gull1}
    \end{subfigure}
~ 
        \begin{subfigure}[b]{0.235\textwidth}
        \includegraphics[width=\textwidth]{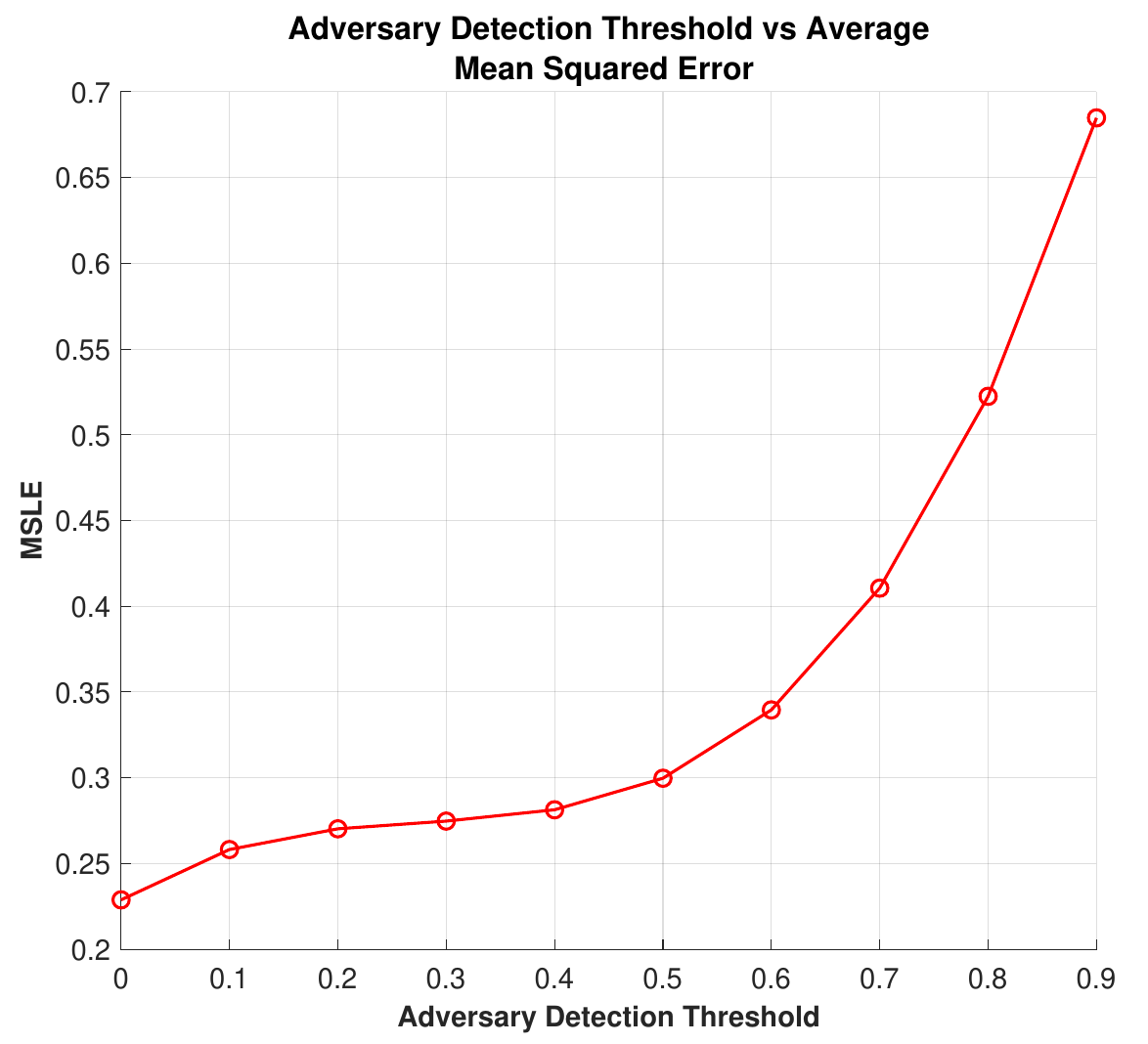}
        \caption{Varying the adversary detection threshold}
        \label{fig:gull2}
    \end{subfigure}  
    ~ 
        \begin{subfigure}[b]{0.235\textwidth}
        \includegraphics[width=\textwidth]{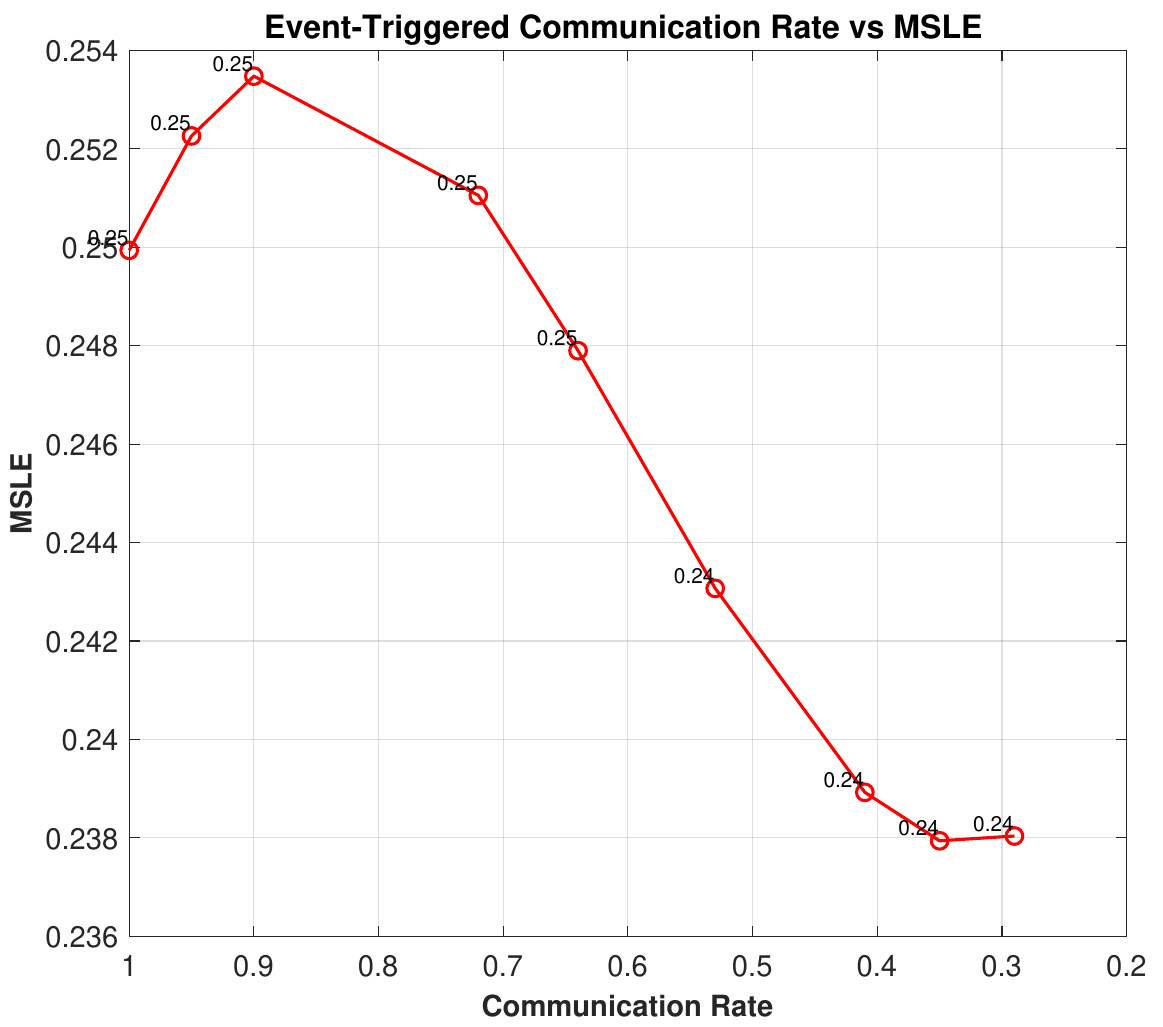}
        \caption{Increasing communication rate in the presence of an adversarial environment }
        \label{fig:gull3}
    \end{subfigure}  
     ~ 
        \begin{subfigure}[b]{0.235\textwidth}
        \includegraphics[width=\textwidth]{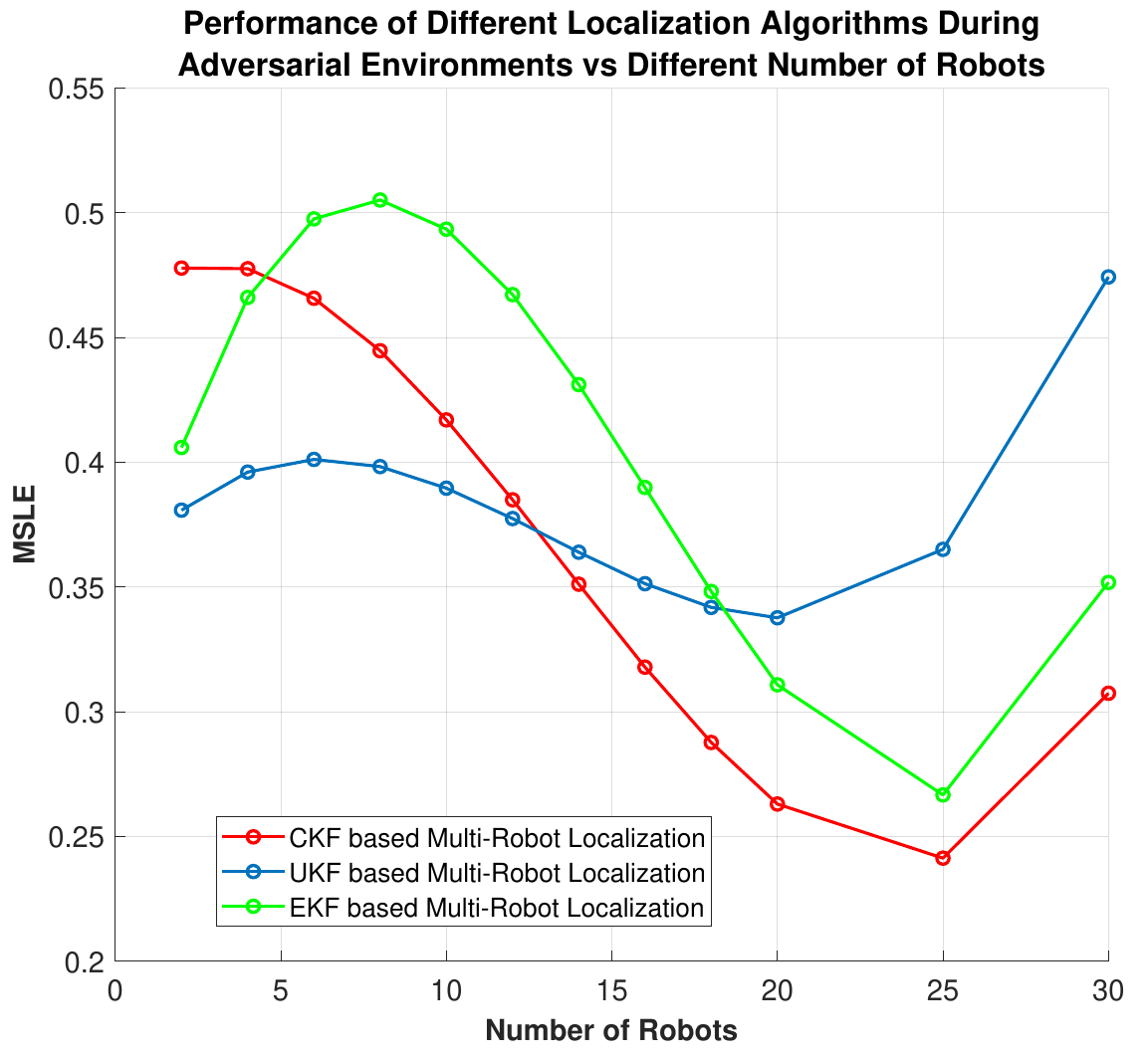}
        \caption{Increasing numbers of robots with adversaries }
        \label{fig:gull4}
    \end{subfigure}  
~ 
       \begin{subfigure}[b]{0.23\textwidth}
        \includegraphics[width=\textwidth]{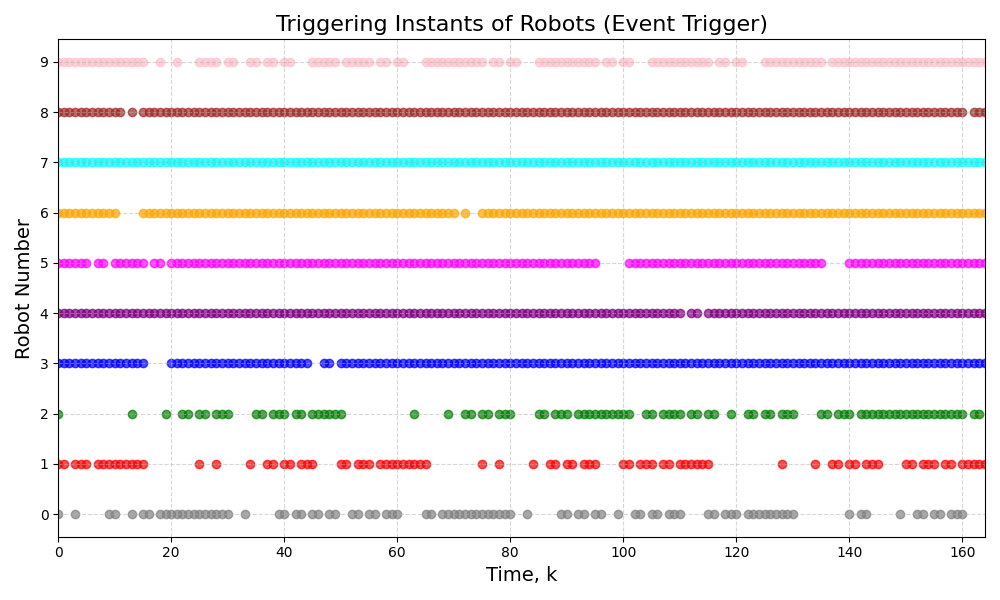}
        \caption{Transmission instances of robots with the
event-triggered mechanism under the adversarial radius
zones of 0.2 (Average Communication Rate = 0.3)}
        \label{fig:gull5}
    \end{subfigure}
~ 
       \begin{subfigure}[b]{0.23\textwidth}
        \includegraphics[width=\textwidth]{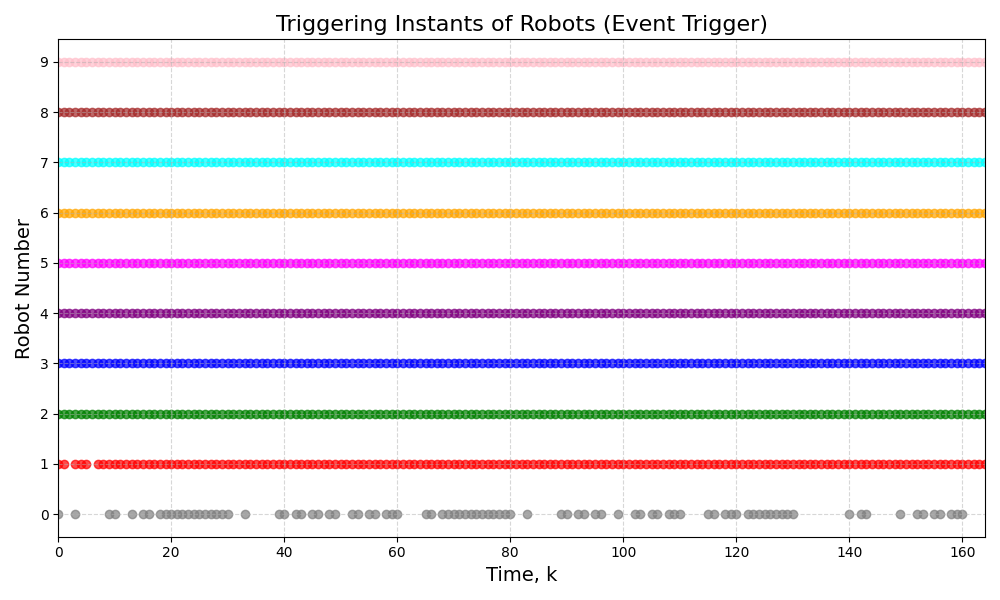}
        \caption{Transmission instances of robots with the
event-triggered mechanism under the adversarial radius
zones of 0.3 (Average Communication Rate = 0.75)}
        \label{fig:gull6}
    \end{subfigure}
~ 
       \begin{subfigure}[b]{0.23\textwidth}
        \includegraphics[width=\textwidth]{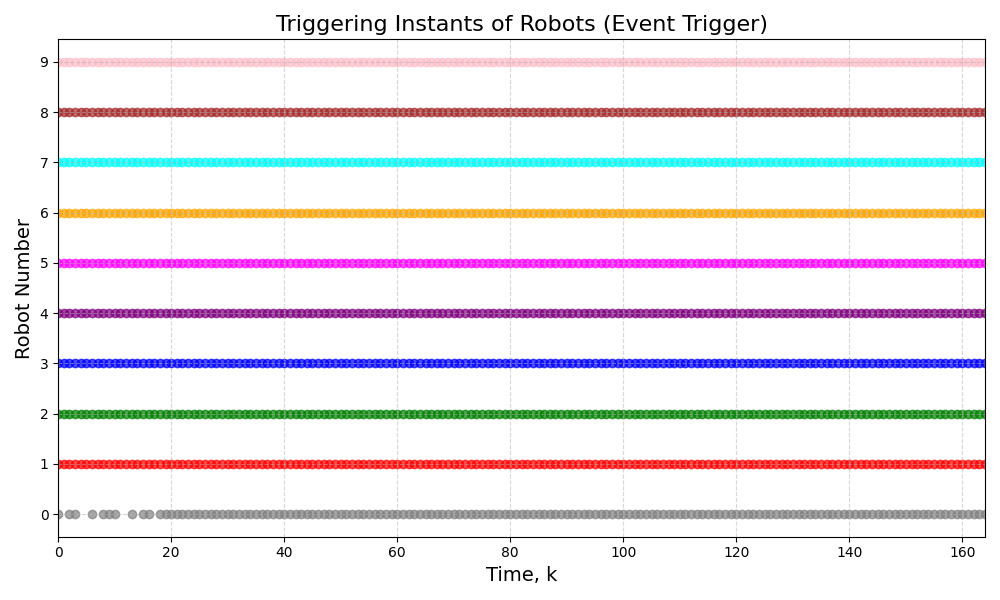}
        \caption{Transmission instances of robots with event-triggered mechanism under adversarial radius zones of 0.4 (Average Communication Rate = 0.89) }
        \label{fig:gull7}
    \end{subfigure}
~ 
       \begin{subfigure}[b]{0.23\textwidth}
        \includegraphics[width=\textwidth]{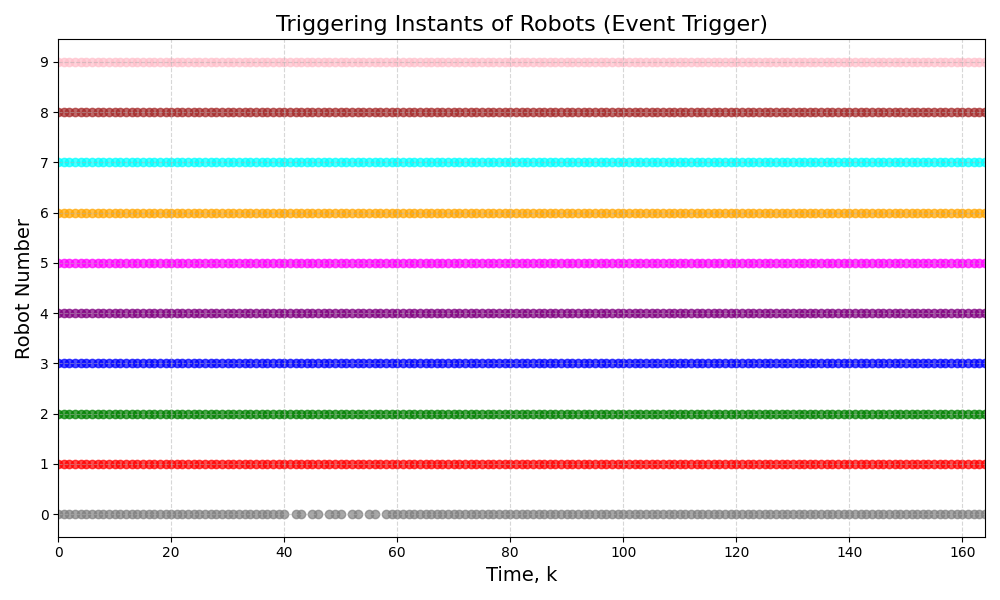}
        \caption{Transmission instances of robots with event-triggered mechanism under adversarial radius zones of 0.5
(Average Communication Rate = 0.93)}
        \label{fig:gull8}
    \end{subfigure}
    \caption{ Performance of CKF-based localization algorithm during
adversarial environment and different communication strategies.}
  \label{fig:events}
\end{figure*} 

\subsection{Impact of Adversaries on Cooperative Localization}
We provide a detailed comparison of the CKF, UKF, and EKF-based multi-robot localization algorithms under non-adversarial (Fig. 3a) and adversarial conditions (Fig. 3b), with particular emphasis on the influence of sensing and communication danger zones. In the non-adversarial environment, where no disruptions are present, the CKF-based algorithm consistently outperforms the UKF and EKF, achieving the lowest MSLE and exhibiting minimal fluctuations, highlighting its stability and accuracy during the arc-turn maneuver (see Fig. 4a). In Fig. 3b, the adversarial strategy involves sensing and communication danger zones, where relative sensor measurements are manipulated, and communication links are disrupted through jamming. 

Under these conditions, partial link failures cause noticeable performance degradation across all localization algorithms, with the UKF experiencing the most significant increase in error variability, while the CKF maintains a relatively stable and lower MSLE (see Fig. 4b). In the case of full-sensing and communication link failures, the impact of adversarial conditions is more severe, leading to higher MSLE for all localization algorithms. However, the CKF offers more resilient localization, with a smaller increase in error compared to the UKF and EKF (see Fig. 4c).  These results highlight the critical impact of sensing and communication danger zones on localization accuracy while also demonstrating the CKF's superior resilience and robustness in addressing these challenges. This makes the CKF-based algorithm particularly well-suited for dynamic MRS operating in adversarial environments.

\subsection{Impact of the Size of Adversarial Zones}
Here, we investigate the effects of the proposed event-triggered communication framework under varying adversarial zone radii and its impact on localization performance and communication rate. Fig. 5(a) shows the spatial distribution of robots performing an arc-turn maneuver, where both the danger sensing zones (red) and communication zones (blue) increase the radii by (0.2, 0.3, 0.4, and 0.5). As the radius increases, it leads to greater overlap with the robot network, requiring higher communication rates to preserve localization accuracy and maintain network resilience. Fig. 5(a) highlights the localization performance, demonstrating larger deviations in estimation errors as the radius of adversarial zones increases, thus reflecting higher communication demands to mitigate localization degradation. 

Figs. 5(e)-(h) analyzes the communication rates of robots in each scenario, demonstrating that smaller adversarial zones exploit the event-triggered mechanism by operating close to the detection threshold. This results in lower communication rates (e.g., 0.3 for a radius of 0.2) and a gradual degradation in localization quality. In contrast, larger zones trigger more frequent transmissions by raising average communication rates from 0.3 to 0.93, to mitigate the impact of increased adversarial zones. These results emphasize the framework's resilience against varying adversarial zone radii.

\subsection{Impact of Adversary Detection and Event-Triggers}
In this part, we assess the effect of adversary detection and event-triggered communication on localization performance in adversarial environments.
Fig. 5(b) illustrates the impact of the adversary detection threshold on the localization performance of the CKF-based framework. As the threshold increases, the MSLE steadily grows, indicating that a higher threshold allows more adversarial effects to remain undetected. This highlights the critical need to carefully tune the adversary detection threshold to mitigate the impact of adversarial environments while avoiding false positives, which can unnecessarily degrade performance.

Fig. 5(c) demonstrates the effect of the event-triggered communication mechanism under adversarial conditions. As the communication rate decreases, the MSLE improves slightly, showcasing the mechanism's effectiveness in reducing communication overhead without compromising localization accuracy. These results highlight the robustness of the event-based CKF localization algorithm, particularly in resource-constrained and adversarial scenarios.

\subsection{Scalability Analysis}
Fig. 5(d) presents the scalability performance of the CKF-based multi-robot localization algorithm in adversarial environments.{\color{black} As shown in Fig. 5(d), all methods perform worse with fewer robots due to the lack of relative observations, which limits the information available for accurate localization.} However, the performance improves with more robots, as the network becomes denser, enhancing observability and resilience to adversarial effects. The results demonstrate the robustness of CKF in maintaining high localization accuracy as the number of robots increases. For smaller team sizes (e.g., $N = 12$), CKF achieves a significantly lower MSLE compared to UKF and EKF, indicating its superior performance. As the team size grows, CKF continues to outperform both algorithms, with the MSLE consistently decreasing and stabilizing below $0.25$~m at $N = 25$. In contrast, the EKF-based approach shows notable degradation in performance, with the MSLE peaking around $N = 10$, followed by a marginal improvement. The UKF-based localization algorithm shows moderate performance but fails to scale effectively for larger team sizes ($N > 25$), where the MSLE increases substantially under adversarial conditions. The scalability and robustness of the CKF-based algorithm highlight its suitability for large-scale, cooperative robotic systems operating in adversarial and dynamic environments. 

Overall, the proposed localization framework demonstrates resilience under adversarial conditions by leveraging adaptive adversary detection and event-triggered communication strategies. These mechanisms enhance system performance, minimize communication overhead, and improve the robustness of MRS operating in adversarial environments.
However, {\color{black} this study assumes accurate modeling of sensor noise and adversarial behavior, which may not hold in all real-world scenarios. Future work will focus on learning-based threshold tuning, formal resilience guarantees under worst-case attacks, and extending the framework to real-world heterogeneous robot teams \cite{starks2023hero} as well as to 3D environments \cite{ghanta2024space}.}

\section{Conclusion}
In this work, we propose a decentralized cooperative localization framework based on the cubature Kalman Filter to enhance the resilience and accuracy of multi-robot systems operating in adversarial environments. The framework integrates an adaptive event-triggered communication strategy, which dynamically adjusts communication thresholds based on real-time sensing reliability and communication quality. This strategy improves the system’s resilience to adversaries by ensuring that only the most informative updates are transmitted, minimizing communication overhead while maintaining robust localization performance in the presence of sensor degradation and communication disruptions. Experimental results show that the proposed CKF-based algorithm outperforms traditional methods such as the EKF and UKF, particularly in adversarial conditions involving manipulated relative sensor measurements and communication jamming. The event-triggered strategy significantly enhances the system's ability to recover from communication and sensor failures, improving localization accuracy and system stability. Moreover, the framework demonstrates scalability, robustness, and efficiency, making it suitable for deployment in large-scale, dynamic, and resource-constrained MRS.

\section*{Acknowledgements}
Research was sponsored by the U.S. Army Research Laboratory and was accomplished under Cooperative Agreement Number W911NF-17-2-0181 (DCIST-CRA). 
The views and conclusions contained in this document are those of the authors and should not be interpreted as representing the official policies, either expressed or implied, of the Army Research Laboratory or the U.S. Government. The U.S. Government is authorized to reproduce and distribute reprints for Government purposes notwithstanding any copyright notation herein.

\end{document}